\newcommand{\eml}{\texttt{ease.ml}\xspace}
\newcommand{\meter}{\texttt{ease.ml/meter}\xspace}
\newcommand{\ubar}[1]{\underaccent{\bar}{#1}}
\newcommand{\fh}[1]{\textcolor{red} {fh: #1}}
\newtheorem{theorem}{Theorem}
\newtheorem{definition}{Definition}
\newtheorem{example}{Example}
\newtheorem{corollary}{Corollary}
\DeclareMathOperator{\OVFT}{OVFT}
\begin{document}

\title{Quantitative Overfitting Management for Human-in-the-loop ML Application Development with ease.ml/meter\\
\large {\em Towards Data Management for Statistical Generalization}}

\author{
Frances Ann Hubis$^\dagger$~~~~~Wentao Wu$^\ddagger$~~~~~Ce Zhang$^\dagger$\\
$^\dagger$ETH Zurich~~~~~~~$^\ddagger$Microsoft Research, Redmond\\
\{hubisf, ce.zhang\}@inf.ethz.ch~~~~~wentao.wu@microsoft.com
}

\maketitle

\begin{abstract}
Simplifying machine learning (ML) application development, including distributed
computation, programming interface, resource management, model selection, etc,
has attracted intensive interests recently. These research efforts have
significantly improved the {\em efficiency} and the {\em degree of automation}
of developing ML models.

In this paper, we take a first step in an orthogonal direction towards
\emph{automated quality management} for human-in-the-loop ML application
development. We build \meter, a system that can automatically detect and measure
the degree of \emph{overfitting} during the whole lifecycle of ML application
development. \meter~returns overfitting signals with strong probabilistic
guarantees, based on which developers can take appropriate actions.
In particular, \meter~provides principled guidelines to simple yet nontrivial
questions regarding desired validation and test data sizes, which are among
commonest questions raised by developers.
The fact that ML application development is typically a \emph{continuous}
procedure further worsens the situation: The validation and test data sets can
lose their statistical power quickly due to \emph{multiple accesses}, especially
in the presence of \emph{adaptive analysis}. \meter~addresses these challenges
by leveraging a collection of novel techniques and optimizations, resulting in
practically tractable data sizes without compromising the probabilistic
guarantees. We present the design and implementation details of \meter, as well
as detailed theoretical analysis and empirical evaluation of its effectiveness.
\end{abstract}

\section{Introduction}

The fast advancement of machine learning technologies has triggered tremendous
interest in their adoption in a large range of applications, both in science
and business. Developing robust machine learning (ML) solutions to real-world,
mission-critical applications, however, is challenging. It is a
\emph{continuous} procedure that requires many iterations of training, tuning,
validating, and testing various machine learning models before a good one can
be found, which is tedious, time-consuming, and error-prone. This dilemma has
inspired lots of recent work towards simplifying ML application development,
covering aspects such as distributed computation~\cite{mulee-parameterSever,mllib}, resource management~\cite{LiZLWZ18,SparksTHFJK15}, AutoML~\cite{googleCloudML,Northstar}, etc.

During the last couple of years, we have been working together with many
developers, most of whom are not computer science experts, in building a range
of scientific and business applications~\cite{ackermann2018using,girardi2018patient,schawinski2017generative,schawinski2018exploring,stark2018psfgan,su2018generative}
and in the meanwhile, observe the challenges that they are facing. On the
positive side, recent research on efficient model training and AutoML definitely
improves their productivity significantly. However, as training machine learning
model becomes less of an issue, new challenges arise --- in the iterative
development process of an ML model, users are left with a powerful tool but not
enough principled guidelines regarding many design decisions that cannot be
automated by AutoML.

\vspace{0.5em}
\noindent
{\bf (Motivating Example)} In this paper, we focus on two of the most commonly
asked questions from our users\footnote{As an anecdotal note, THE most popularly
asked question is actually ``{\em how large does my training set need to be?}''}
---

\begin{center}
\em (Q1) How large does my validation set need to be?
\end{center}

\begin{center}
\em
(Q2) How large does my test set need to be?
\end{center}

\noindent
Both questions essentially ask about the {\em generalization} property of these
two datasets --- they need to be large enough
such that they form a
{\em representative sample} of
the (unknown) underlying true
distribution. However,
giving meaningful answers to these two questions (rather than answers like ``as
large as possible'' or
``maybe a million'') is
not an easy task. The reasons
are three fold. First, the
answers depend on the {\em error tolerance}
of the ML application --- a
mission-critical application
definitely needs a larger
validation/test set.
Second,
the answers depend on {\em the history
of how these data were
used} --- when a
dataset (especially the validation set)
is used multiple times,
it loses its statistical power and
as a result its size relies on
the set of all historical operations ever conducted. Third, the answers need to be
practically feasible/affordable --- simply
applying standard concentration
inequalities can lead to answers
that require millions of (or more)
labels that may be intractable.

\begin{figure}
\centering
\includegraphics[width=0.6\textwidth]{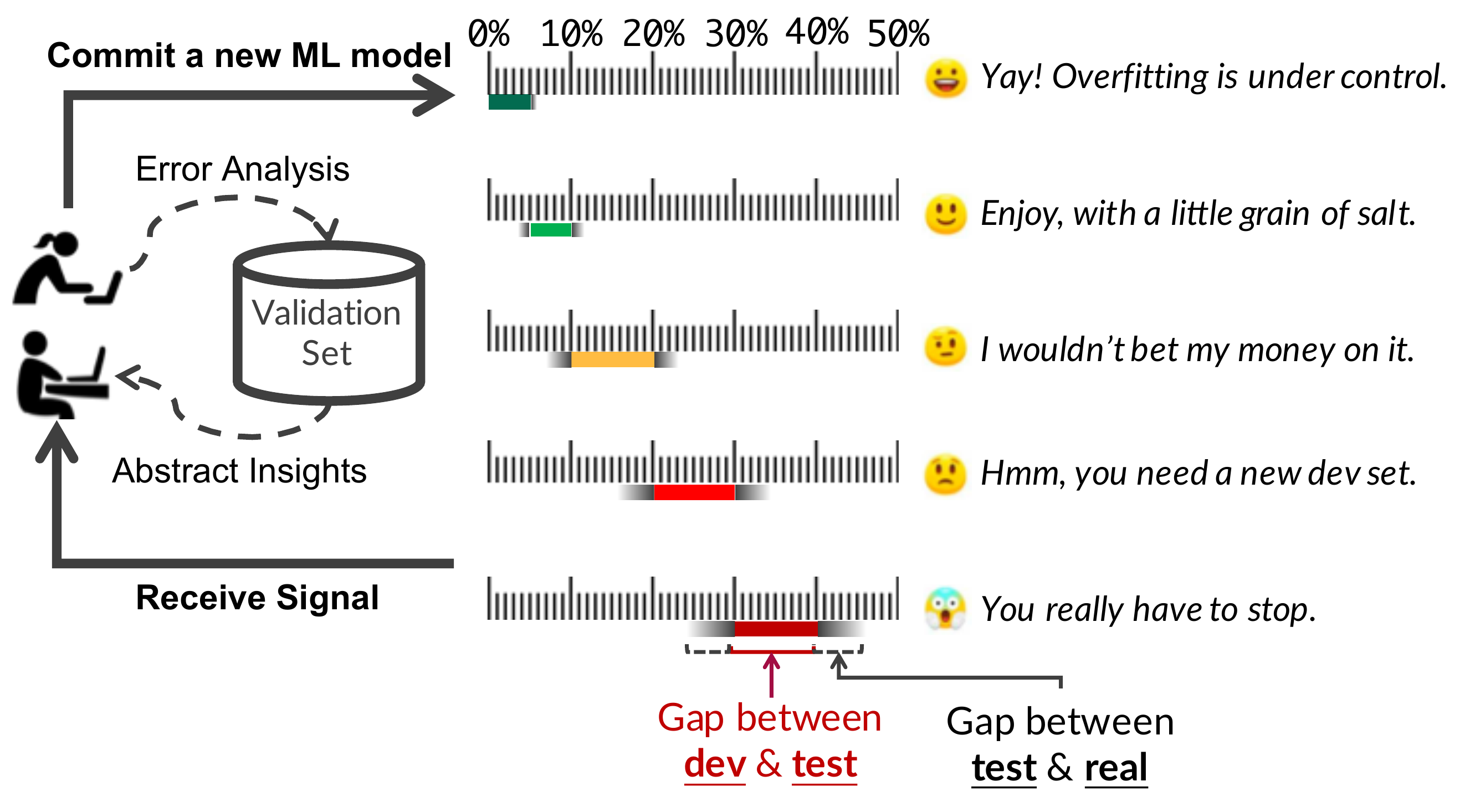}
\caption{Interaction with \meter.}
\label{fig:overview}
\end{figure}

\paragraph*{A Data Management System for Generalization}

In this paper, we present \meter,
a system that takes the first step in (not completely) tackling the above challenges.
Specifically, \meter~is a data management system designed to manage the statistical
power of the validation and test data sets.
Figure~\ref{fig:overview} illustrates
the functionality of \meter, which interacts
with its user in the following way:
\begin{enumerate}
\item The user inputs a set of
parameters specifying
the error tolerance of an ML application
(defined and scoped later).
\item The system returns $|D_{val}|$
and $|D_{test}|$, the required
sizes of the validation and test sets
that can satisfy the error tolerance.
\item The user provides a
validation set and a test set.
\item The user queries the validation
and test set. The system returns
the answer that satisfies the user-defined
error tolerance in Step 1. The system
monitors and constrains the usage of
the given validation set and test set.
\item When the system discovers that
the validation set or test set
loses its statistical power, it
asks the user for
another validation or test set
of size $|D_{val}|$
or $|D_{test}|$, respectively.
\end{enumerate}

\vspace{0.5em}
\noindent
{\bf (Scope)}
In this paper, we
focus on a very specific (yet typical)
special case of the above generic
interaction
framework --- the user
has \emph{full} access to the validation
set. She starts from the current
machine learning model, $H_i$,
conducts error analysis by looking
at the error that $H_i$ is
making on the validation set,
and proposes a new machine learning
model $H_{i+1}$ (e.g., by
adding new feature extractors,
trying a different ML model,
or adding more training examples).
However, as the validation
set is open to the user, after
many development iterations
the user's decision might
start to overfit to this specific
validation set.
Meanwhile, although the test set is hidden from the user,
the signals returned by the system inevitably carries \emph{some} information about the test set as well.
As a result, the user's decision might also overfit to this particular test set, which undermines its plausibility as a delegate of the underlying true data distribution.
{\em The goal of
our system is to (1) inform the
user whenever she needs to
use a new validation set
for error analysis, and (2) inform
the user whenever she needs to
use a new test set to measure
validation set's overfitting behavior.}

\vspace{0.5em}
\noindent
{\bf (System Overview)} In the
above workload, the
``error tolerance'' $\epsilon_{tot}$ is the generialization
power of the current validation set.
Specifically, let
$D_{test} \sim \mathcal{D}_{test}$
be the test set and $D_{val}$
be the validation set.
Let $H$ be
a machine learning model and
$l(H, -)$ returns the loss of $H$ on $D_{val}$, $D_{test}$,
and $\mathcal{D}_{test}$, respectively.\footnote{$l(H, \mathcal{D}_{test})$ represents the \emph{expected} loss over the true distribution, which is unknown.}
We assume that the user hopes to be
alerted whenever
\[
| l(H, D_{val}) - l(H, \mathcal{D}_{test}) | > \epsilon_{tot}.
\]
One critical design decision
in \meter is to decompose the LHS of the above inequality into two terms:
\begin{enumerate}
\item {\bf Empirical Overfitting:}
$|l(H, D_{val}) - l(H, D_{test})|$;
\item {\bf Distributional Overfitting:}
$|l(H, D_{test}) - l(H, \mathcal{D}_{test}))|$.
\end{enumerate}
The rationale behind this decomposition
is that it separates the roles
of the validation set and the
test set when overfitting occurs --- when the {\em empirical
overfitting} term is large,
the validation set ``diverges from''
the test set, and therefore,
the system should ask for a new validation set; when the {\em distributional overfitting} term is large,
the test set ``diverges from''
the true distribution, and therefore,
the system should ask for a new
test set.
Moreover, empirical overfitting can be directly computed, whereas distributional overfitting has to be estimated using nontrivial techniques, as we will see.

\meter~provides a ``\textbf{\emph{meter}}'' to
communicate the current level
of empirical
overfitting and distributional
overfitting to the user.
For each model $H_i$ the
user developed, the system
returns one out of five
possible signals as illustrated
in Figure~\ref{fig:overview} ---
the solid color bar represents
the range of the empirical
overfitting, and the gray color
bar represents the upper bound of
distributional overfitting.
{\em The user decides whether
to replace a validation set
according to the signals returned by the system, and
the system asks for a new
test set whenever it cannot guarantee
that the distributional overfitting
is smaller than a user-defined
upper bound, $\epsilon$.}

\paragraph*{Technical Challenges and Contributions}
The key technical challenge in
building \meter is to estimate distributional overfitting, which is nontrivial due to the fact
that subsequent versions of the ML application
(i.e., $H_i$)
can be \emph{dependent} on prior versions
(i.e., $H_1,...,H_{i-1}$).
As was demonstrated by recent work~\cite{DworkFHPRR15}, such kind of \emph{adaptivity} can accelerate the degradation of test set, fading its statistical power quickly.
To accommodate dependency between successive versions of the ML application, one may have to use a larger test set (compared with the case where all versions are \emph{independent} of each other) to provide the
same guarantee on distributional overfitting.

\vspace{0.5em}
\noindent
{\bf C1.} The first technical contribution of this work is to adapt techniques developed
recently by the theory and
ML community on
adaptive statistical querying~\cite{blum2015ladder, DworkFHPRR15}
to our scenario. Although
the underlying method (i.e., bounding
the description length) is the same,
it is the first time that such techniques
are applied to enabling a scenario that
is similar to what \meter tries to support.
Compared with the naive approach that
draws a new test set for each new version
of the application, the amount of test samples required by \meter~can be \emph{an order
of magnitude smaller}, which makes it more practical.

\vspace{0.5em}
\noindent
{\bf C2.}
The second technical contribution of this work is a set of simple, but novel optimizations that further reduce the amount of test samples required.
These optimizations go beyond traditional
adaptive analytics techniques~\cite{blum2015ladder, DworkFHPRR15} by taking into consideration
different specific operation modes that
\meter provides to its user, namely (1)
non-uniform error tolerance, (2) multi-tenant
isolation, (3) non-adversarial developers,
and (4) ``time travel.'' Each
of these techniques focuses on one specific
application scenario of \meter, and can
further reduce the
(expected) size of the
test set significantly.

\paragraph*{Relationship with Previous Work}
The most similar work to \meter is
our recent paper \texttt{ease.ml/ci}~\cite{CI}.
\texttt{ci} is a ``continuous integration''
system designed for the ML development process ---
given a new model provided by the user,
\texttt{ci} checks whether certain
statistical property holds (e.g.,
the new model is better than the old model,
tested with $(\epsilon, \delta)$). At a
very high level, \texttt{meter}
shares a similar goal as \texttt{ci}, however,
from the technical perspective, is
significantly more difficult to build,
for two reasons. First, \meter cannot
use the properties of the continuous integration
process (e.g., the new model will not change
too much) to optimize for the sample
complexity. As we will see, to achieve
practical sample
complexity, \texttt{meter} relies on a completely
different set of optimizations. Second,
\meter needs to support multiple signals,
instead of a binary pass/fail
signal as in \texttt{ci}.
As a result, we see \texttt{ci}
and \texttt{meter} fall into the
same ``conceptual umbrella'' of
{\em data management for
statistical generalization}, but focus
on different scenarios and thus
require different technical optimizations
and system designs.

\paragraph*{Limitations} We believe that
\meter~is an ``innovative system''
that provides functionalities that we
have not seen in current ML ecosystems.
Although \meter works reasonably well for
our target workload in this paper, it
has several limitations/assumptions
that require
future investment.
First, \meter~requires new test data points as well as their labels from developers.
Although recent work on automated labeling (e.g., Snorkel~\cite{RatnerBEFWR17}) alleviates the dependency on human labor for generating labeled training data points, it does not address our problem as we require labels for test data rather than training data.
Second, the question of what actions should be taken upon receiving overfitting signals is left to developers.
A specific reaction strategy can lead to a more specific type of adaptive analysis that may have further impact on reducing the size of the test set.
Moreover, while this work focuses on monitoring overfitting, there are other aspects regarding quality control in ML application development.
For instance, one may wish to ensure that there is no performance regression, i.e., the next version of the ML application must improve over the current version~\cite{CI}.
We believe that quality control and lifecycle management in ML application development is a promising area that deserves further research.


\section{Preliminaries} \label{sec:preliminaries}

The core of \meter is based on the theory of answering
adaptive statistical queries~\cite{DworkFHPRR15}. In this section, we start
by introducing the traditional data model in {\em supervised}
machine learning, and the existing theory of supporting
adaptive statistical queries which will serve as the baseline
that we will compare with in \meter. As we will see in
Section~\ref{sec:optimization}, with a collection of simple, but
novel optimization techniques, we are able to significantly bring
down the requirement of the number of human labels, sometimes by several orders of magnitude.

\subsection{Training, Validation, and Testing}

In the current version of \meter, we focus on the
supervised machine learning setting, which consists of three
data distribution: (1) the {\em training} distribution
$\mathcal{D}_{train}$, (2) the
{\em validation} distribution $\mathcal{D}_{val}$,
and (3) the {\em test} distribution
$\mathcal{D}_{test}$, each of which defines a probability distribution
$P_{\mathcal{D}}$ over $(x, y)$, where $x \in \mathbb{R}^d$
is the {\em feature vector} of dimension $d$
and $y \in \mathbb{R}$ is the {\em label}.

\vspace{0.5em}
\noindent
{\bf (Application Scenarios)}
In traditional supervised learning setting, one
often assumes that all three distributions are the same, i.e., $\mathcal{D}_{train} = \mathcal{D}_{val} =
\mathcal{D}_{test}$. In \meter, we intentionally distinguish
between these three distributions to incorporate {\em two} emerging
scenarios that we see from our users. First, in
{\em weakly supervised} learning paradigm such as
data programming~\cite{RatnerSWSR16} or distant
supervision~\cite{MintzBSJ09}, the training
distribution $\mathcal{D}_{train}$ is a {\em noisy} version
of the real distribution. As a result
$\mathcal{D}_{train} \ne \mathcal{D}_{test}$.
The second example is motivated by one
anomaly detection application we built together
with a telecommunication company, in which the
validation and training distributions are
{\em injected} with anomalies and the real distribution
is an empirical distribution composed of/from real anomalies collected over history.
As a result, $\mathcal{D}_{val} \ne \mathcal{D}_{test}$.
The functionality provided by
\meter does not rely on the assumption that these
three distributions are the same.

\vspace{0.5em}
\noindent
{\bf (Sampling from Distribution)}
When building ML applications, in many, if not all, cases
user does not have access to the data distributions. Instead, user only has access to a {\em finite set of samples}
from each distribution: $D_{train} = \{(x^{t}_i, y^{t}_i)\} \sim \mathcal{D}_{train}$,
$D_{val} = \{(x^{v}_i, y^{v}_i)\} \sim \mathcal{D}_{val}$,
and $D_{test} = \{(x^{r}_i, y^{r}_i)\} \sim \mathcal{D}_{test}$.
As noted in the introduction, one common question from
our users is: {\em How large does the
training/validation/test set need to be?} The goal of
\meter is to provide {one} way of deciding the test set size
$|D_{test}|$, as well as when to draw a new test set.

\subsection{Human-in-the-loop ML Development}

An ML application is a function
$H: \mathbb{R}^d \mapsto \mathbb{R}$ that maps a feature vector $x$ to its predicted
label $f(x)$. Coming up with this function is
not a one-shot process, as indicated in previous work~\cite{url2, url3, url1, url4, url5}.
Instead, it often involves
human developers who (1) start from
a baseline application $H_0$, (2) conduct
error analysis by looking at the prediction
of the current application $H_t$ and summarize a
taxonomy of errors the application is making,
and (3) try out a {\em ``fix''} to produce a
new application $H_{t+1}$. A potential
``fix'' could be (1) adding a new feature,
(2) using a different noise model of data,
and (3) using a different model family,
architecture, or hyperparamter.

There are different frameworks of modeling human behavior.
In this paper, we adopt one that is commonly used by previous work on
answering adaptive statistical queries~\cite{blum2015ladder, DworkFHPRR15, HardtU14, SteinkeU15}.
Specifically, we assume that the user,
at step $t$, is a \emph{deterministic} mapping $U$
that maps the current application $H_t$ into
\[
H_{t+1} := U(H_t, D_{val}, D_{train}, g(H_t, D_{test}), \xi_t)
\]
We explain the parameterization of $H_t$ in the following:

\begin{enumerate}
\item The first three parameters $H_t$, $D_{val}$, and $D_{train}$ captures the scenario that the human developer has full access to
the training and validation sets, as well as the current version $H_t$ of the
ML application, and can use
them to develop the next version $H_{t+1}$ of the application.
\item The fourth parameter $g(H_t, D_{test})$ captures the
scenario in which the human developer only
has limited access to the test set.
Here $g$ is a \emph{set function} mapping from $H_t$ and $D_{test}$ to a set of \emph{feedback} returned by \meter to the developer.
As a special case, if $g(\cdot,\cdot) = \emptyset$,
it models the scenario in which the developer does not
have access to the test set at all (i.e., does not have any feedback) during development.

\item The fifth parameter $\xi_t$ models the ``environment effect''
that is orthogonal to the developer.
$\xi_t$ is a variable that does not
rely on past decisions, and is only
a function of the step ID $t$.
\end{enumerate}

When it is clear from the context that
$D_{val}$, $D_{train}$,
and $\xi_t$ are given, we abbreviate the notation as
\[
H_{t+1} = U(H_t, g(H_t, D_{test})).
\]

\vspace{0.5em}
\noindent
{\bf (Limitations and Assumptions)}
There are multiple limitations
that are inherent to the above model of
human behaviours.
Some we believe are
fixable with techniques similar to what we
propose in this paper, while others are more
fundamental.
The most fundamental assumption is that {\em human decision
does not have impact on the environment}.
In other words, $H_{t+1}$ is only a function of $H_0$ and all past feedback signals
$g(H_t, D_{test})$. There are also
other potential extensions that one could
develop. For example, instead of treating human behavior as a deterministic function $U$, one can extend it to a class of deterministic functions following some (known or unknown) probabilistic distribution.

\subsection{Generalization and Test Set Size}

The goal of ML is to learn a model over
a finite sample that can be {\em generalized} to
the underlying distribution that the user does not
have access to. In this paper, we focus on
the following {\em loss function}
$l: \mathbb{R}^d \times \mathbb{R} \times \mathbb{R} \rightarrow \{0, 1\} , (x,y,f(x)) \mapsto l(x,y,H(x)).$
which maps each data point, along with its prediction,
to either 0 or 1. (We refer to this loss as ``0-1 loss.'') We also focus on the following
notion of ``generialization'' for a given model $H$ (\cite{UMLbook}):
\[
\Pr\left[ \left| \frac{1}{n} \sum_{D_{test}} l(x,y, H(x)) - \mathbb{E}_{\mathcal{D}_{test}} l(x, y, H(x)) \right| > \epsilon \right] < \delta,
\]
where $(x,y)\in D_{test}$ in the first $l(x,y,H(x))$ and $(x,y)\sim\mathcal{D}_{test}$ in the second $l(x,y,H(x))$.
Given an ML model $H$, {\em checking} whether $H$
generializes according to this definition
is simple if one only uses the test set $D_{test}$ \emph{once}. In this case,
one can simply apply Hoeffding's inequality (Appendix~\ref{appendix:theory:hoeffiding})
to obtain:
\[
|D_{test}| \ge \frac{\ln 2/\delta}{2\epsilon^2}.
\]
This provides a way of deciding the
required number of samples in the test set.
It becomes tricky, however, when the test set is
{\em used multiple times}, and the goal of
\meter is to automatically manage this scenario
and decrease the required size of $D_{test}$.

\subsection{Adaptive Analysis and Statistical Queries}\label{sec:preliminaries:adaptive}

In recent years, there is an emerging research field regarding the so-called {\em adaptive analysis} or {\em reusable holdout}~\cite{DworkFHPRR15} that focuses on ML scenarios
where the test set $D_{test}$ can be accessed multiple times.
In our setting, consider $T$ ML models $H_1$, ... $H_T$ where
\[
H_{t+1} = U(H_t, g(H_t, D_{test})).
\]
The goal is to make sure that
\[
\Pr\left[ \exists t, ~~ \left| \frac{1}{n} \sum_{D_{test}} l(x,y, f_t(x)) - \mathbb{E}_{\mathcal{D}} l(x, y, f_t(x)) \right| > \epsilon \right] < \delta.
\]
When $g(\cdot,\cdot)$ is non-trivial,
simply applying union bound and requiring a
test set of size (see Appendix~\ref{appendix:theory:sample-complexity-basics:independent} for details)
\begin{equation}\label{eq:independent}
|D_{test}| \ge \frac{\ln (2T/\delta)}{2\epsilon^2}
\end{equation}
does not provide the desired probabilistic guarantee because of the \emph{dependency} between $H_t$ and $H_{t+1}$.
We will discuss adaptive analysis in detail when we discuss the overfitting meter in Section~\ref{sec:meter}.

\vspace{0.5em}
\noindent
{\bf (Baseline: Resampling)}
If a new test set $D_{test}^{(t)}$ is sampled from the distribution
$\mathcal{D}_{test}$ in each step $t$, to make sure that
with probability $1-\delta$
{\em all} $T$ models return a generalized
loss, one only needs to apply union bound
to make sure that each sampled test set
generalizes with probability $1 - \delta/T$.
Thus, to support $T$ adaptive steps,
one needs a test set of size (see Appendix~\ref{appendix:theory:sample-complexity-basics:re-sampling} for details)
\begin{equation}\label{eq:resampling}
|D_{test}| \ge \frac{T \ln (2T/\delta)}{2\epsilon^2}.
\end{equation}
This gives us a simple baseline approach that
provides the above generalization guarantee.
Unfortunately, it usually requires
a huge amount of samples, as there is essentially
no ``reuse'' of the test set.

\section{System Design}\label{sec:sys}

We describe in detail (1) the interaction model
between a user and \meter, (2) different system
components of \meter, and (3) the formalization
of the guarantee that \meter provides.
Last but not least, we provide a concrete
example illustrating how \meter would
operate using real ML development traces.

\subsection{User Interactions} \label{sec:sys:syntax}

\meter assumes that there are two types of users ---
(1) developers, who develop ML models, and (2)
labelers, who can provide labels for data points in the validation set or the test set. We
assume that the developers and labelers do not
have offline communication that \meter is not aware of
(e.g., labelers cannot send developers the
test set via email).

\paragraph*{Access Control} The separation
between developers and labelers is to allow \meter to
manage the access of data:

\begin{enumerate}

  \item Developers have full access to the
  validation set;

  \item Developers have \emph{no} access to the
  test set (\meter encrypts the test set and
  only labelers can decrypt it);

  \item Labelers have full access to the
  test set;

  \item Labelers have full access to the validation
  set.

\end{enumerate}

The rationale for the above protocol is that
\meter can measure the amount of information
that is ``leaked'' from the test set to the developers,
which, as we will see, is the key to bounding the degree of overfitting over the test set.

\paragraph*{Interaction with Developers}
A meter (as illustrated in Figure~\ref{fig:example_meter})
is specified by a set of triples
\[
\{(\ubar{r}_i,
\bar{r}_i, \epsilon_i)\}_{i=1}^{m}
\text{ s.t. }\cup_i [\ubar{r}_i, \bar{r}_i] = 1;\text{ }
\forall i \ne j, [\ubar{r}_i, \bar{r}_i] \cap [\ubar{r}_j, \bar{r}_j] = \emptyset.
\]
Each $(\ubar{r}_i,
\bar{r}_i, \epsilon_i)$ defines one of the $m$ possible \emph{overfitting signal}s (e.g., $m=5$ in Figure~\ref{fig:overview}) returned by \meter
to the developer: (1) Each $[\ubar{r}_i,
\bar{r}_i]$ specifies the range of
{\em empirical overfitting}
that this signal covers (i.e., the
solid color bar in Figure~\ref{fig:example_meter});
and (2) $\epsilon_i$ specifies the
upper bound of {\em distributional
overfitting} that this signal
guarantees (i.e., the gray color bar
in Figure~\ref{fig:example_meter}).

We assume in \meter that
\[
\ubar{r}_i < \bar{r}_i = \ubar{r}_{i+1} < \bar{r}_{i+1}, \text{ and } \epsilon_1 \le \epsilon_2 \le ... \le \epsilon_m.
\]
The rationale behind the non-decreasing
${\epsilon_i}$ is because of the intuition
that when the empirical overfitting
is already quite large, the developer
often cares less about small distributional overfitting.

	\begin{figure}
	\centering
	\includegraphics[width=0.4\textwidth]{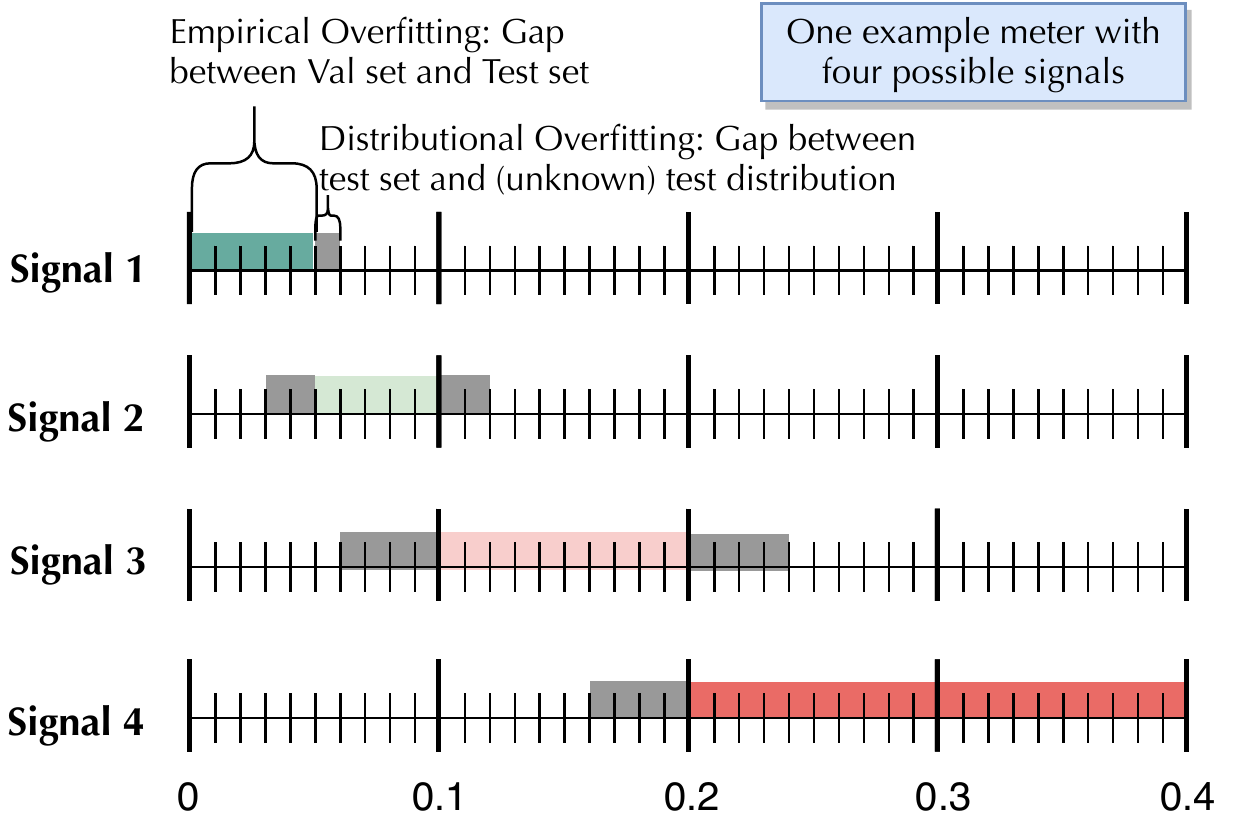}
	\caption{An example meter defined by four triples,
			i.e., four possible signals
			(see Section~\ref{sec:sys:syntax}):
			Signal $i$ is defined by
			$(\underline{r}_i, \bar{r}_i, \epsilon_i)$.
			If the system sends Signal 1 to the user, it means that the
			gap between the validation set accuracy and the test
			set accuracy falls into $[\underline{r}_i=0, \bar{r}_i=0.05]$,
			while the test set accuracy is at most $\epsilon_1 = 0.01$
			away from the (unknown) accuracy on the real data
			distribution.}
	\label{fig:example_meter}
	\end{figure}

\paragraph*{Interaction with Developers}
There are two phases when the developer interacts with \meter, {\em initialization}
and {\em model development}:

\begin{enumerate}

	\item {\em Initialization Phase (user initiates).} The developer
	initializes an \meter session by
	specifying the {\em length of development cycle}, $T$,
	the number of iterative development steps the developer hopes that this session can support;
	the developer also provides the {\em performance metric},
	$l$, a loss function (e.g., accuracy) whose output is bounded by $[0, 1]$.
	The developer further submits the current validation set
	to \meter.

	\item {\em Initialization Phase (system response).}
	Given $T$, the meter returns $|D_{test}|$, the
	number of examples required to support $T$
	development steps from the developer. The system
	will then request $|D_{test}|$ labels from the
	{\em labeler}.

\end{enumerate}

The developer starts development after the initialization phase:

\begin{enumerate}

	\item {\em Model Development Phase (developer initiates).} The developer
	submits the new ML model to \meter.

	\item {\em Model Development Phase (system response).} Given the
	model $H$, the system
	calculates the {\em empirical overfitting}, $|l(H, D_{val}) - l(H, D_{test})|$ (i.e., the
	gap between the losses over the validation and test sets) and finds the
	response $i$ s.t.
	\[
	\ubar{r}_i \leq |l(H, D_{val}) - l(H, D_{test})| \leq \bar{r}_i
	\]
	and returns the value $i$ to the developer.
	In the meantime, the system
	guarantees that the distributional
	overfitting, i.e.,
	\[
	|l(H, D_{test}) - l(H, \mathcal{D}_{test})|
	\]
	is smaller than $\epsilon_i$,
	with (high) probability $1 - \delta$.

	\item {\em Model Development Phase (developer)}. The developer
	receives the response $i$ and decodes it to $[\ubar{r}_i, \bar{r}_i]$.
	The developer then decides whether the empirical overfitting is
	too large. If so,
	she might choose to collect a new validation set.

\end{enumerate}

After $T$ development cycles (i.e., after the developer has checked in $T$ models)
the system terminates. The developer may then initiate a new \meter session.

\paragraph*{Interaction with Labelers}
Labelers are responsible for providing labeled test data.
Whenever the preset budget $T$ is used up, that is, the developer has
submitted $T$ versions of the ML application, \meter~issues a new request to the labeler to ask for a new, independent test set.
The old test set can be replaced by and released to the developer for development use.

\subsection{Overfitting Signals}\label{sec:sys:semantics}

In standard ML settings, overfitting is connected with model training -- not testing.
This is due to the (implicit) assumption that the test set will only be accessed
\emph{once} by the ML model.
In the context of continuous ML application development, this assumption is no
longer valid and the test set is subject to overfitting as well.
The presence of adaptive analysis further accelerates the process towards overfitting.
We now formally define the semantics
of the overfitting signals returned by \meter. Without loss of generality, we
assume that $\epsilon_1 = ... = \epsilon_m = \epsilon$
and only discuss the case of them being
different in Section~\ref{sec:optimization:non-uniform}.

\subsubsection{Formalization of User Behavior}\label{sec:sys:semantics:behavior}

To formalize this notion of overfitting, we need precise characterization of the
behavior of the developer:

\begin{enumerate}

	\item The developer does {\em not} have
	access to $D_{test}$.

	\item At the beginning, the developer specifies: (1) $T$, the number
	of development iterations; (2) $\epsilon$, the
	\emph{tolerance} of {\em distributional overfitting} (defined later); and (3) $1-\delta$,
	the \emph{confidence}.
	In response, the system returns the required size $|D_{test}|$.

	\item At every single step $t$, the system returns to the user
	an indicator $I_t \in \{1, \cdots, m\}$ which is a function of
	$D_{test}$. As we will see, $I_t=i$ indicates that degree of
	overfitting is bounded by $[\ubar{r}_i-\epsilon, \bar{r}_i+\epsilon]$, with
	probability at least $1-\delta$.

\end{enumerate}

\subsubsection{Formal Semantics of Overfitting}

There could be various definitions and semantics of overfitting.
It is not our goal to investigate all those alternatives in this work, which is
itself an interesting topic.
Instead, we settle on the following definitions that we believe are useful via
our conversations with ML application developers.

\vspace{0.5em}
\noindent
{\bf ``Distributional Overfitting'' and ``Empirical Overfitting''.}
Formally, let $H$ be an ML application and $l$ be a performance measure
(e.g., a loss function). For a given data set $D_{test}$ drawn i.i.d. from
$\mathcal{D}_{test}$, we use $l(H, D_{test})$ to represent the performance
of $H$ over $D_{test}$. We also use $\mathbb{E}_{D\sim\mathcal{D}_{test}}[l(H, D)]$
to represent the \emph{expected} performance of $H$ over the distribution
$\mathcal{D}_{test}$. We define the {\em degree of overfitting of the
validation set} as
$$
   \OVFT(D_{val}, \mathcal{D}_{test})
   = l(H, D_{val})  - \mathbb{E}_{D\sim\mathcal{D}_{test}}[l(H, D)].
$$
We decompose this term into
two terms -- the empirical
difference between $D_{val}$
and ${D}_{test}$, i.e.,
$$
    \Delta_{H}(D_{val}, D_{test}) =
    l(H, D_{val})  - l(H, D_{test})
$$
and the ``quality'' of estimator relying on ${D}_{test}$, i.e.,
$$
     \Delta_{H}(D_{test}, \mathcal{D}_{test}) :=
         l(H, D_{test}) - \mathbb{E}_{D\sim\mathcal{D}_{test}}[l(H, D)].
$$
We call the first term $\Delta_{H}(D_{val}, D_{test})$ {\em empirical
overfitting} as it is measured in terms of the empirical loss, and call
the second term $\Delta_{H}(D_{test}, \mathcal{D}_{test})$ {\em distributional
overfitting} as it measures the gap between the current test set and the
(unknown) true distribution. One crucial design decision we made in \meter is
to decouple these two terms and only report the empirical overfitting to the
user while treating distributional overfitting as a hard error tolerance
constraint.

\vspace{0.5em}
\noindent
{\bf Distributional Overfitting (Overfitting).} As the empirical overfitting
term can be measured directly by calculating the difference between the
validation set and the test set, the technical challenge of \meter~hinges
on the measurement/control of distributional overfitting. In the rest of this
paper, we use the term ``$D_{test}$ overfits by $\epsilon$'' to specifically
refer to distributional overfitting. When the context of $D_{test}$ and
$\mathcal{D}_{test}$ is clear, we use
\begin{align*}
   \Delta_H \equiv \Delta_{H}(D_{test}, \mathcal{D}_{test})
\end{align*}
to denote distributional overfitting.

We want to measure $\Delta_H $ not just for a single model $H$, but in the
context of a series $H_0,...,H_T$ of models.

\begin{definition}\label{definition:history}
We say that $D_{test}$ overfits by $\epsilon$ with respect to a submission
history $H_0, ..., H_T$ and performance measure $l$, if and only if
$\exists H \in \mathcal{H}^{(T)},~|\Delta_H| > \epsilon.$
Here $\mathcal{H}^{(T)}=\{H_0, ..., H_T\}.$
\end{definition}

Intuitively, this guarantees that, as long as the test set $D_{test}$ does not
overfit up to step $t$, all decisions made by the developer are according
to a test set that closely matches the real distribution -- at least in terms of
some aggregated statistics (e.g., accuracy).



	\begin{figure}[t]
	\centering
		\subfigure[System X (SemEval 2019)]{
	 		\includegraphics[width=0.22\textwidth]{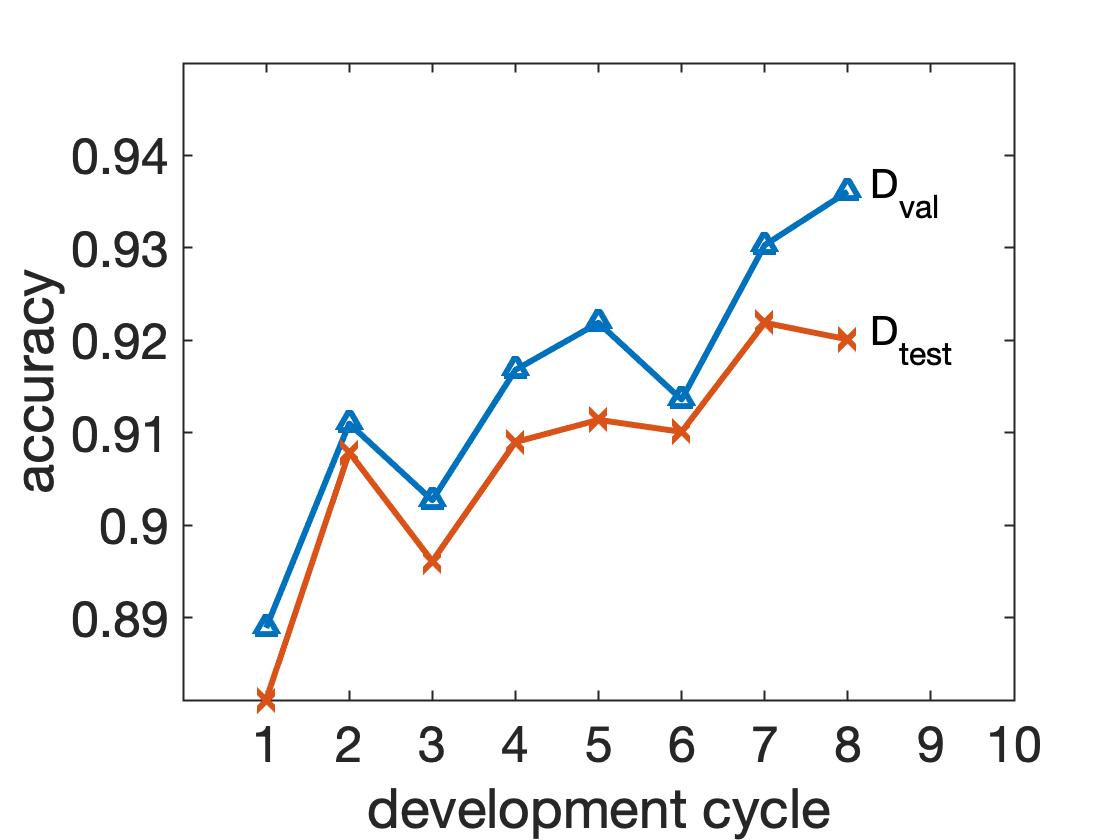}
	        \label{fig:development-history-system-x}
		}
		\subfigure[System Y (SemEval 2018)]{
	 		\includegraphics[width=0.22\textwidth]{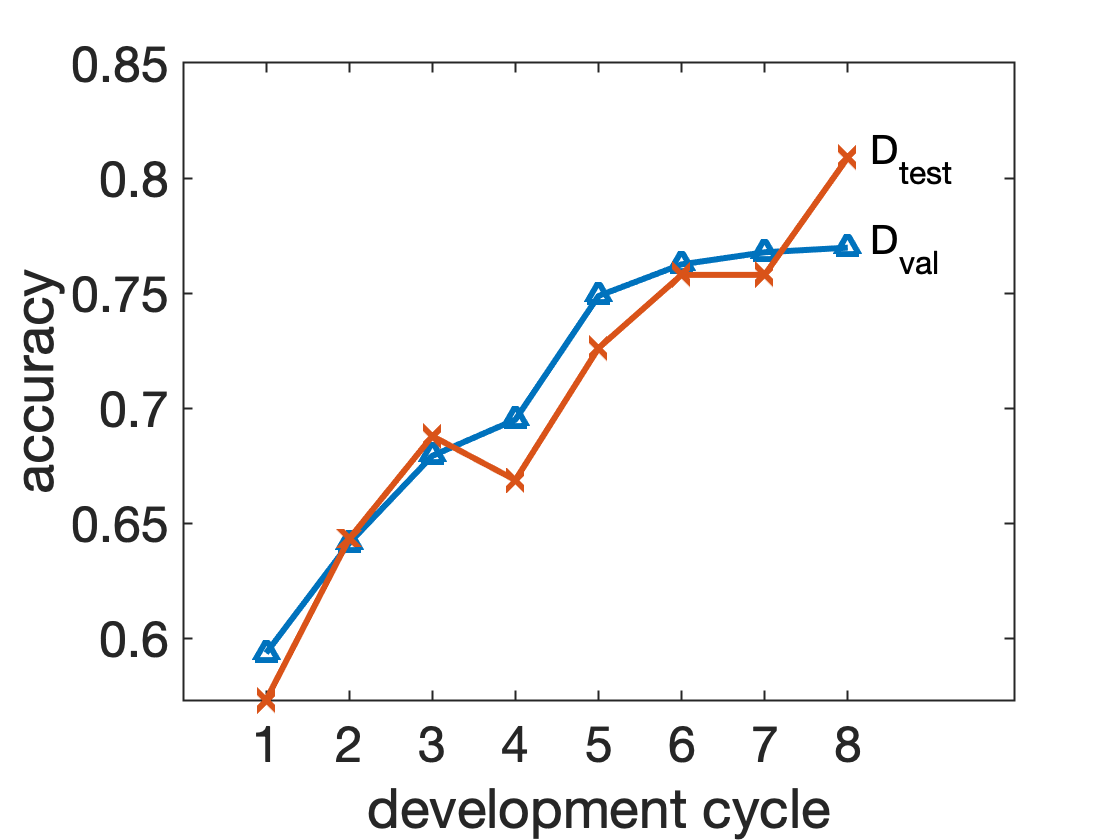}
	        \label{fig:development-history-system-y}
		}
		\subfigure[System Y (SemEval 2019) for $\delta = 0.1$ and $\epsilon = 0.01$]{
	 		\includegraphics[clip, trim=0cm 2.5cm 0cm 2.5cm, width=0.45\textwidth]{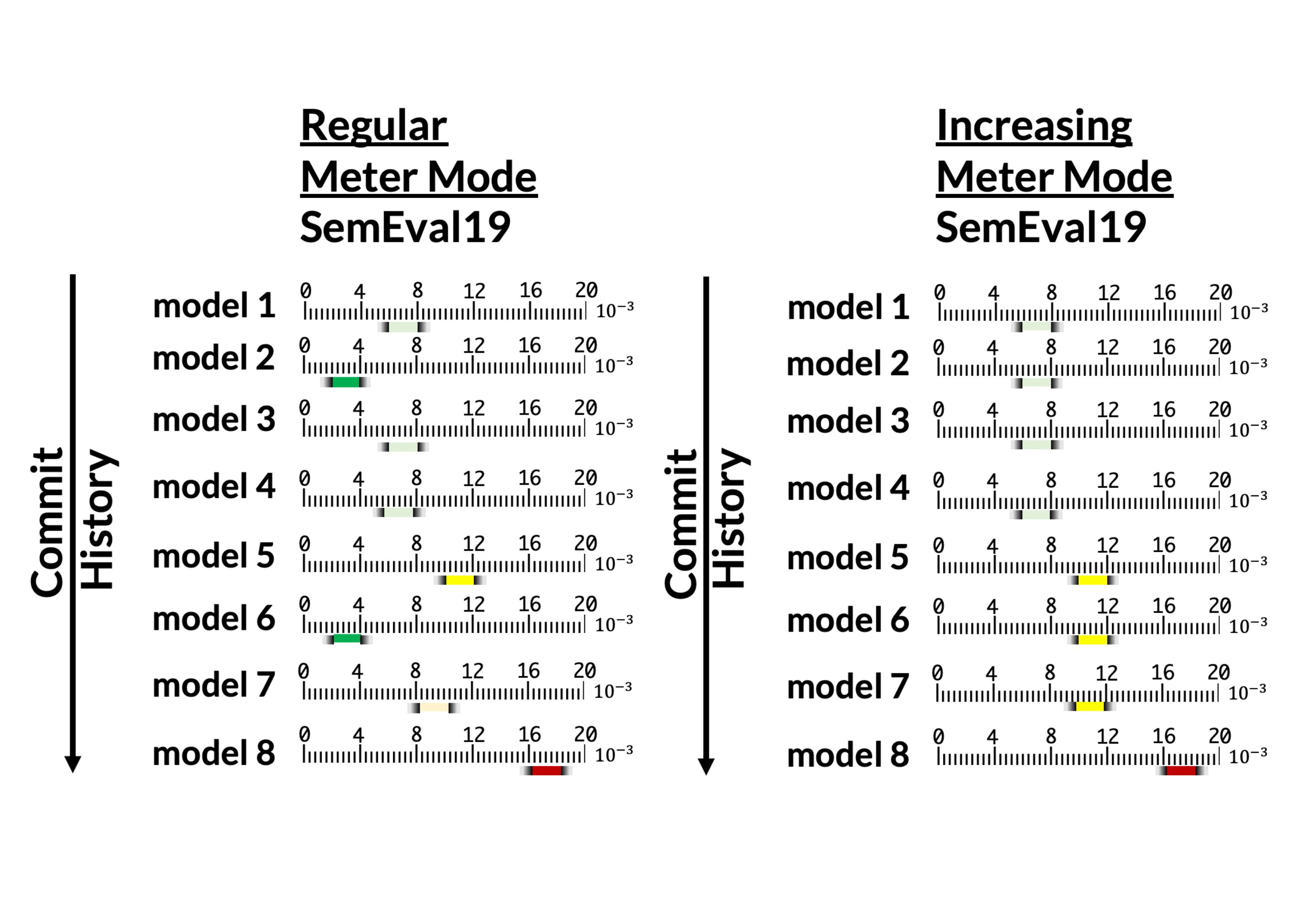}
	 		\label{fig:meters-for-y}
		}
	\caption{(a, b) Real development
			history traces of two ML applications; (c) Signals
			that developers would get when applying \meter
			to these two development traces.} 
	\label{fig:case-studies:meter}
	\end{figure}

\subsection{Example Use Cases} \label{sec:sys:use-cases}

To illustrate how \meter can be used in the development process of ML models, we
use the development trace data from two real-world ML applications we developed
in the past and showcase the signals \meter would return to its user.

\paragraph*{Development Trace 1: Emotion Detection in Text}
As our first case study, we took the development history of System X, which
is a participant of the ``\emph{EmoContext}'' task in SemEval
2019.\footnote{\url{https://www.humanizing-ai.com/emocontext.html}}
This task aims for detecting emotions from text leveraging contextual
information, which is deemed challenging due to the lack of facial expressions
and voice modulations.\footnote{\url{https://competitions.codalab.org/competitions/19790}}
It took developers eight iterations before delivering the final version of
System X. Changes in each individual step include adding various word
representations such as ELMo~\cite{ELMo} and GloVe~\cite{GloVe}, which lead to
significant performance increase/drop.
Figure~\ref{fig:development-history-system-x} plots the accuracy of System X on the validation set and the test set (assuming that the accuracy on the test set was reported to the user in every development step), respectively, in each of
the eight development steps.

\paragraph*{Development Trace 2: Relation Extraction}
Our second case study comes from System Y~\cite{RotsztejnHZ18}, which is a
participant of the task ``Semantic Relation Extraction and Classification in
Scientific Papers'' in SemEval
2018.\footnote{\url{https://competitions.codalab.org/competitions/17422}}
This task aims for identifying concepts from scientific documents and
recognizing the semantic relation that holds between the concepts.
In particular, it requires semantic relation extraction and classification
into six categories specific to scientific literature. The development history
of System Y indicates that it involves eight steps before reaching the final (version of the) system.
Figure~\ref{fig:development-history-system-y} presents the accuracy of
System Y on the training set (using 5-fold cross validation) and the test set,
respectively, for each step in the development cycle.

\paragraph*{Meter in Action}
Figure~\ref{fig:case-studies:meter} illustrates the signals that developers would
receive when applying \meter to these two development traces.
At each step, the current empirical overfitting is visible while the distributional overfitting is guaranteed to be smaller than $\epsilon = 0.01$ with probability $1 - \delta = 0.9$ (i.e., $\delta = 0.1$).

Figure~\ref{fig:case-studies:meter} also reveals two working modes of \meter: (1) the \emph{regular meter} and (2) the \emph{incremental meter}.
The regular meter simply returns an overfitting signal for each submission that indicates its degree of overfitting, as we have been discussing so far.
However, this is often unnecessary in practice, as developers usually only care about the \emph{maximum} (or \emph{worst}) degree of overfitting of \emph{all} the models that have been submitted.
The rationale is that the tolerance for overfitting usually only depends on the application, not a particular model --- a submitted model is acceptable as long as its overfitting is below the application-wide tolerance threshold.
The incremental meter is designed for this observation.

It is worth mentioning some tradeoffs if developers choose to use the incremental meter.
On the positive side, it can significantly reduce the number of human labels, compared with the regular meter (see Sections~\ref{sec:meter:regular} and~\ref{sec:meter:incremental}).
For instance, for the particular setting here ($\epsilon=0.01$ and $\delta=0.1$), the incremental meter would have required only 50K labels, compared with the 80K labels required by the regular meter, to support $T=8$ steps as in Figure~\ref{fig:meters-for-y}.
On the negative side, developers may lose clue on the performance of an individual submission, if the incremental meter does not march --- in such case developers only know that this submission is better than the \emph{worst} one in the history.

\subsection{Discussion}
One may wonder why not taking a more straightforward approach that bounds $|\OVFT(D_{val}, \mathcal{D}_{test})|$ directly, rather than the decomposition strategy \meter uses.
The rationale is that the former becomes much more challenging, if not impossible, when the validation distribution drifts from the true distribution (i.e., $\mathcal{D}_{val}\neq\mathcal{D}_{real}$).
When there is no distribution drift, we can indeed apply the same techniques in Section~\ref{sec:meter} below to $D_{val}$ (in lieu of $D_{real}$) to derive a lower bound for $|D_{val}|$.
However, given that the developer has full access to $D_{val}$ and $D_{val}$ is often used for hyper-parameter tuning that involves lots of iterations (i.e., very large $T$'s), the required $|D_{val}|$ can easily blow up.

In fact, it is even not our goal to bound $|\OVFT(D_{val}, \mathcal{D}_{test})|$.
Recall that, \meter~aims for understanding sizes of \emph{both} the validation set and the test set.
Even if directly bounding $|\OVFT(D_{val}, \mathcal{D}_{test})|$ were possible, it would only give us an answer to the question of desired validation set size, and the question about desired test set size remains unanswered.
Our decision of decomposing $\OVFT(D_{val}, \mathcal{D}_{test})$ is indeed a design choice, not a compromise.
Instead of providing a specific number about the validation set size, \meter~answers the question in probably the strongest sense: \emph{Pick whatever size --- the validation set size no longer matters!}
In practice, one can simply take a conservative, progressive approach: Start with a validation set with moderate size, and let the meter tell the degree of overfitting (via explicit control over the test set size); If the degree of overfitting exceeds the tolerance, replace the validation set (e.g., by adding more samples).
Therefore, our decomposition design is indeed a ``two birds, one stone'' approach that simultaneously addresses the two concerns regarding both validation and test set sizes.

Given that we do not explicitly bound $|\OVFT(D_{val}, \mathcal{D}_{test})|$, one may raise the question about the semantics of overfitting signals by \meter, in terms of $|\OVFT(D_{val}, \mathcal{D}_{test})|$.
When \meter~returns an overfitting signal $i$, it indicates the corresponding range $[\ubar{r}_i, \bar{r}_i]$ on the meter.
Since $\ubar{r}_i \leq |\Delta_{H}(D_{val}, D_{test})| \leq \bar{r}_i$ and $|\Delta_{H}(D_{test}, \mathcal{D}_{test})|\leq\epsilon$,
it follows that, with probability (i.e., confidence) at least $1-\delta$,
$$\ubar{r}_i-\epsilon\leq |\OVFT(D_{val}, \mathcal{D}_{test})|\leq \bar{r}_i+\epsilon.$$

\paragraph*{Limitations and Assumptions}
Although the above setup is quite generic, a range of limitations remain.
One important limitation is that \meter assumes that {\em user
is able to draw samples from each distribution at any time/step}. This is not always true, especially in many
medical-related applications and anomaly detection
applications in physical systems. Another limitation is
that \meter assumes that {\em each data distribution
is stationary}, i.e., all three distributions
do not change or drift over time, though in many applications
concept/domain drift is inevitable~\cite{GamaMCR04, Zliobaite10}.
We believe that these limitations are all interesting directions to explore.
However, as one of the early efforts on overfitting management, we leave these as
future work in this paper.

\section{Monitoring Overfitting}\label{sec:meter}

We now present techniques in \meter~that monitor the degree of (distributional) overfitting.
We first piggyback on techniques recently developed in adaptive analysis~\cite{blum2015ladder,DworkFHPRR15} and apply them to our new scenario.
We have also developed multiple simple, but novel, optimizations, which we will
discuss later in Section~\ref{sec:optimization}.
The main technical question we aim to answer is that, given the error tolerance
parameters $\epsilon$, $\delta$, and the length of development
cycles $T$, \emph{how large should the test set $D_{test}$ be}?

\begin{figure}[t]
\centering
\includegraphics[width=0.45\textwidth]{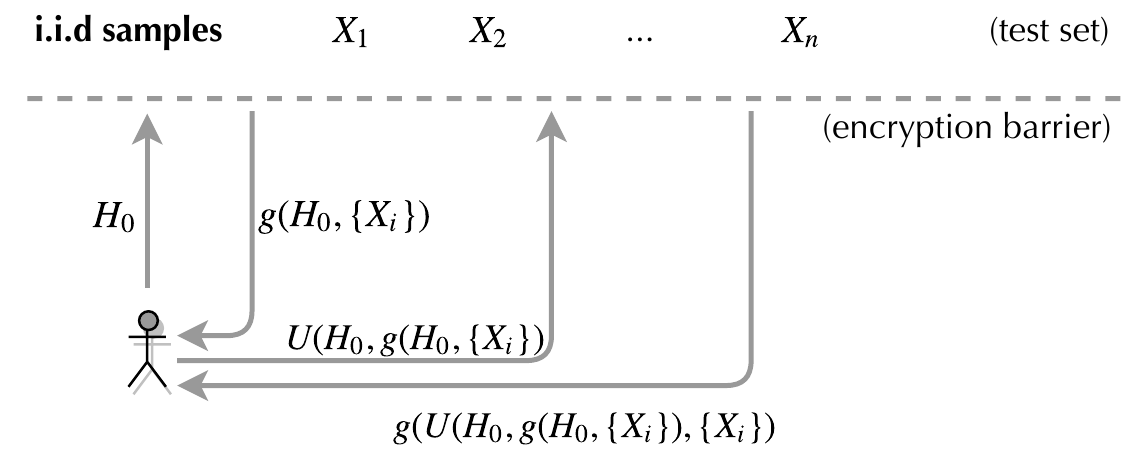}
\vspace{1em}
\caption{An illustration of adaptive analysis.
		The performance of the
		$t^{\text{th}}$ model submission $H_{t}(\{X_i\})$ on the test
		set $\{X_i\}$ is reported back to the
		developer indirectly via a feedback function $g(\cdot)$.}
\label{fig:adaptive}
\end{figure}

\subsection{Recap: Adaptive Analysis}\label{sec:meter:adaptive-analysis}

As we have discussed in Section~\ref{sec:preliminaries:adaptive}, we cannot simply draw a fresh test set for each new submission (i.e., the \textbf{Resampling} baseline), as it would become prohibitively expensive in practice for reasonable choices of $\epsilon$ and $\delta$ in many circumstances.
For instance, if we set $\epsilon=0.01$ and $\delta=0.01$, it would require 380K examples to test just $T=10$ models (by Equation~\ref{eq:resampling}).

To reduce this sample complexity, it is natural to consider \emph{reusing} the same test set for subsequent submissions.
As one special case (which is
unrealistic for \meter), if all submissions are \emph{independent} (i.e., the next submission does not depend on the overfitting signal returned by \meter~for the present submission), then we can simply apply the union bound combined with the Hoeffding's inequality to conclude a sample complexity as shown in Equation~\ref{eq:independent}.
Using the previous setting again ($\epsilon=0.01$, $\delta=0.01$, and $T=10$), we now need only 38K examples in the test set.

However, this independence assumption seldom holds in practice, as the developers would always
receive overfitting signals returned by \meter,
which in the worst case, would always
have impact on her choice of the next model
(see Figure~\ref{fig:adaptive}).
It then implies that the models submitted by developers can be \emph{dependent}.
We now formally examine this kind of \emph{adaptive analysis} during ML application development in more detail, studying its impact on the size of the test set.


\subsection{Cracking Model Dependency}\label{sec:meter:description-length}

The basic technique remains similar to when the submissions are independent: We can (1) apply Hoeffding's inequality to each submission and then (2) apply union bound to \emph{all possible submissions}.
While (1) is the same as
the independent case, (2) requires additional work as the set of all possible submissions expands significantly under adaptive analysis.
We use a technique based on \emph{description length}, which is similar to those used by other adaptive analysis work~\cite{blum2015ladder, DworkFHPRR15, CI}.

Specifically, consider step $t$.
If the submission $f_{t+1}$ is independent of $f_t$, the number of all possible submissions is simply $T$ after $T$ steps.
If, on the other hand, $f_{t+1}$ depends on both $f_t$ and $I_t$ (i.e., the indicator returned by the meter, which specifies the range $i\in[m]$ that the degree of overfitting of $f_t$ falls into), then for each different value of $I_t$ we can have a different $f_{t+1}$.
To count the total number of possible submissions, we can naturally use a \emph{tree} $\mathcal{T}(m,T)$ to capture dependencies between $f_{t+1}$ and $f_t$.

In more detail, the tree $\mathcal{T}(m,T)$ contains $T+1$ levels, where level $t$ represents the corresponding step $H_t$ in the submission history $\mathcal{H}^{(T)}=\{H_0,...,H_{T}\}$.
In particular, the root represents $H_0$.
Each node at level $t$ represents a particular realization $f_t$ of $H_t$, i.e., a possible submission made by developers at step $t$.
Meanwhile, the children of $f_t$ represent all possible $f_{t+1}$'s that are realizations of $H_{t+1}$ \emph{given that the submission at step $t$ is $f_t$}.

\begin{example}[Dependency Tree]
Figure~\ref{fig:regular-meter} showcases the corresponding tree $\mathcal{T}(2,3)$ for a regular meter when $m=2$ and $T=2$.
It contains $T+1=3$ levels.
The root represents $H_0$, the initial submission.
Since $m=2$, the meter contains two overfitting ranges and therefore can return one of the two possible overfitting signals, Signal 1 or Signal 2.
Depending on which signal is returned for $H_0$, developers may come up with different realizations for $H_1$.
This is why the root has two children at level $1$.
The same reasoning applies to these two nodes at level $1$ as well, which results in four nodes at level $2$.
\end{example}

The problem of applying union bound to all possible submissions under adaptive analysis therefore boils down to computing the size $|\mathcal{T}(m,T)|$ of the tree $\mathcal{T}(m,T)$.
We next analyze $|\mathcal{T}(m,T)|$ for the regular meter and the incremental meter, respectively.

\subsection{Regular Meter}\label{sec:meter:regular}



In the regular meter, each signal
can take $m$ values ($\{1...m\}$)
for a meter with $m$ possible signals.
One can then easily see that, in general, the number of nodes in the (model) dependency tree $\mathcal{T}(m,T)$ is
$|\mathcal{T}(m,T)|=\sum_{t=1}^T m^t=\frac{m(m^T-1)}{m-1}$.
This leads to the following result on sample complexity for the regular meter (the complete proof is in Appendix~\ref{appendix:theory:proofs:uniform}):



\begin{theorem}[Regular Meter]\label{theorem:uniform}
The test set size (in the adaptive setting) of the regular meter satisfies
$$ 2\cdot |\mathcal{T}(m,T)|\cdot\exp \left(-2|D_{test}| \epsilon^2 \right)<\delta,$$
where $|\mathcal{T}(m,T)|=\frac{m(m^T-1)}{m-1}$. As a result, it follows that
\begin{equation}\label{eq:uniform:regular}
    |D_{test}| > \frac{\ln\big(2|\mathcal{T}(m,T)|/\delta\big)}{2\epsilon^2}\approx\frac{T\ln m + \ln \big(2m/((m-1)\delta)\big)}{2\epsilon^2},
\end{equation}
by using the approximation $m^T-1\approx m^T$.
 \end{theorem}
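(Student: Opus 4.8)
The plan is to reduce the adaptive guarantee to a \emph{non-adaptive} union bound over a fixed, data-independent collection of hypotheses --- the nodes of the dependency tree $\mathcal{T}(m,T)$ --- and then apply Hoeffding's inequality node by node. First I would observe that, because the user's update rule $U$ is deterministic and the environment term $\xi_t$ depends only on the step index $t$, the \emph{only} channel through which $D_{test}$ influences the submissions is the sequence of feedback indicators $I_1,\dots,I_T$, each taking one of $m$ values. Consequently, \emph{before} $D_{test}$ is drawn one can already enumerate every realization $f_t$ of $H_t$ that is reachable under \emph{some} feedback sequence; this is exactly the tree $\mathcal{T}(m,T)$, in which the root is the baseline $H_0$ and each node at level $t$ has $m$ children, one per possible value of $I_t$. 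The crucial point is that each node $f$ of this tree is a function fixed in advance, so $l(f,\cdot)$ is independent of $D_{test}$, even though the particular root-to-leaf \emph{path} realized in an actual run is data-dependent.

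Second, I would apply Hoeffding's inequality (Appendix~\ref{appendix:theory:hoeffiding}) to each fixed node $f\in\mathcal{T}(m,T)$: since the $0$--$1$ loss is bounded in $[0,1]$ and $D_{test}$ consists of $|D_{test}|$ i.i.d.\ samples from $\mathcal{D}_{test}$, the two-sided bound gives $\Pr\big[\,|\Delta_f|>\epsilon\,\big]\le 2\exp(-2|D_{test}|\epsilon^2)$. A union bound over all $|\mathcal{T}(m,T)|$ nodes yields $\Pr\big[\exists f\in\mathcal{T}(m,T):|\Delta_f|>\epsilon\big]\le 2|\mathcal{T}(m,T)|\exp(-2|D_{test}|\epsilon^2)$. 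Because the actual submission history $H_0,\dots,H_T$ is always a path in $\mathcal{T}(m,T)$, the event ``$D_{test}$ overfits by $\epsilon$ with respect to $\mathcal{H}^{(T)}$'' (Definition~\ref{definition:history}) is contained in this last event; demanding the right-hand side to stay below $\delta$ gives precisely the displayed inequality $2\cdot|\mathcal{T}(m,T)|\cdot\exp(-2|D_{test}|\epsilon^2)<\delta$.

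Third, I would count $|\mathcal{T}(m,T)|$: each level $t\in\{1,\dots,T\}$ contributes $m^t$ nodes (the baseline $H_0$ at the root need not be counted), so $|\mathcal{T}(m,T)|=\sum_{t=1}^{T}m^t=\tfrac{m(m^T-1)}{m-1}$. Solving the inequality above for $|D_{test}|$ gives $|D_{test}|>\ln\big(2|\mathcal{T}(m,T)|/\delta\big)/(2\epsilon^2)$; substituting the closed form and using $m^T-1\approx m^T$ produces the stated approximation $\big(T\ln m+\ln(2m/((m-1)\delta))\big)/(2\epsilon^2)$.

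The main obstacle, and the conceptually load-bearing step, is the first one: justifying that the adaptively, data-dependently chosen sequence of models can be ``de-adapted'' into a fixed tree of bounded size all of whose nodes are statistically independent of $D_{test}$. This is the standard description-length (compression) argument underlying reusable-holdout results, and the $m^T$ size it yields is exactly the price of adaptivity; once it is in place, the remainder is a routine Hoeffding-plus-union-bound computation. A secondary point to verify carefully is that in the regular meter the feedback $g$ (equivalently $I_t$) takes at most $m$ distinct values \emph{per step}, so that the branching factor is exactly $m$ and the tree is no larger than claimed.
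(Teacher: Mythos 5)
Your proposal is correct and follows essentially the same route as the paper's own proof: enumerate all data-independent realizations of the submissions as the nodes of the $m$-ary dependency tree (the Ladder-style description-length argument), apply Hoeffding to each fixed node, union bound over the $\sum_{t=1}^{T} m^{t} = \frac{m(m^T-1)}{m-1}$ nodes, and solve for $|D_{test}|$. You actually spell out the load-bearing ``de-adaptation'' step (determinism of $U$ and the $m$-valued feedback channel) more explicitly than the paper, which simply cites the Ladder mechanism for that step.
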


\noindent
\textbf{(Comparison to Baseline)}
Compared to the \textbf{Resampling} baseline (Equation~\ref{eq:resampling}), the regular meter (with $m=5$) can reduce the number of test examples from 380K to 108K when $\epsilon=0.01$, $\delta=0.01$, and $T=10$, a 3.5$\times$ improvement.




	\begin{figure}
	\centering
		\subfigure[Regular Meter]{
		    \includegraphics[width=0.2\textwidth]{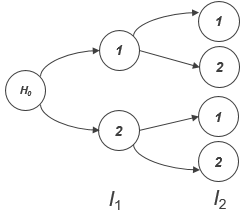}
			\label{fig:regular-meter}
		}
		\subfigure[Incremental Meter]{
		    \includegraphics[width=0.2\textwidth]{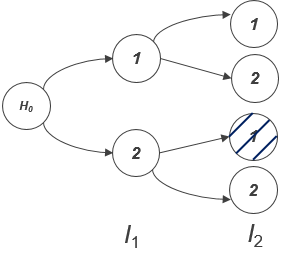}
			\label{fig:incremental-meter}
		}
	\caption{Illustration of regular and incremental meters.}
	\label{fig:proof-idea}
	\end{figure}

\subsection{Incremental Meter}\label{sec:meter:incremental}

As we have discussed in Section~\ref{sec:sys:use-cases}, the incremental meter reports the worst degree of overfitting for all models that have been submitted so far. Formally, at step $t$ it returns
\begin{equation*}
    \mathbb{I}^{(t)}=\max\{I_k|1\leq k\leq t, I_k\in\{1,\cdots,m\}\},
\end{equation*}
where $I_k$ is the overfitting signal that would have been returned by the regular meter at step $k$.
As a result,
the incremental meter can only move (indeed, increase) towards
one direction.
This constraint allows us to further reduce the required amount of test examples,
often significantly (compared to the regular meter):

\begin{theorem} [Incremental Meter]\label{theorem:uniform-incremental}
The test set size (in the adaptive setting) of the incremental meter satisfies
\begin{equation*}
2\cdot |\mathcal{T}(m,T)|\cdot\exp \left(  -2|D_{test}| \epsilon^2 \right)<\delta \, ,
\end{equation*}
where
\begin{align*}
    |\mathcal{T}(m,T)| = \sum \limits_{k}^{m}\sum \limits_{t}^{T} {k+t-2 \choose t-1}  = {m + T \choose m } - 1.
\end{align*}
As a result, it follows that
\begin{equation}\label{eq:uniform:incremental}
   |D_{test}| > \frac{\ln(2 |\mathcal{T}(m,T)|/\delta)}{2\epsilon^2}=\frac{\ln(2\cdot\Big({m + T \choose m } - 1\Big)/\delta)}{2\epsilon^2}.
\end{equation}
\end{theorem}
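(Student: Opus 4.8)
The plan is to combine a description-length union bound --- the same device used for the regular meter in Theorem~\ref{theorem:uniform} --- with a combinatorial count of the nodes of the incremental meter's dependency tree. Once we establish that $|\mathcal{T}(m,T)|$ equals the stated quantity, the probabilistic guarantee $2|\mathcal{T}(m,T)|\exp(-2|D_{test}|\epsilon^2)<\delta$ and the resulting lower bound on $|D_{test}|$ follow exactly as in the regular case: solving that inequality for $|D_{test}|$ yields Equation~\ref{eq:uniform:incremental}.

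For the union bound I would first condition on $D_{val}$, $D_{train}$ and the environment variables $\{\xi_t\}$, so that the only remaining randomness is the fresh i.i.d.\ sample $D_{test}\sim\mathcal{D}_{test}$. Under this conditioning every node of $\mathcal{T}(m,T)$ is a \emph{fixed} model $f$: it is obtained by running the deterministic map $U$ along one particular path of feedback values, and it is this path --- not $D_{test}$ itself --- that indexes the node. Hence $D_{test}$ is independent of each such $f$, and Hoeffding's inequality applied to the $0$-$1$ loss over $|D_{test}|$ samples gives $\Pr[|\Delta_f|>\epsilon]\le 2\exp(-2|D_{test}|\epsilon^2)$. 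The actually realized history $\mathcal{H}^{(T)}=\{H_0,\dots,H_T\}$ is one root-to-leaf path in $\mathcal{T}(m,T)$, so the event ``$D_{test}$ overfits by $\epsilon$ with respect to $\mathcal{H}^{(T)}$'' from Definition~\ref{definition:history} is contained in the event that $|\Delta_f|>\epsilon$ for \emph{some} node $f$. A union bound over the $|\mathcal{T}(m,T)|$ nodes, followed by removing the conditioning, yields the claimed inequality.

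The heart of the proof is the node count. The distinguishing feature of the incremental meter is that its feedback $\mathbb{I}^{(t)}=\max\{I_1,\dots,I_t\}$ is \emph{non-decreasing} in $t$; consequently the feedback path that reaches a level-$t$ node is a non-decreasing sequence in $\{1,\dots,m\}$, and from a node whose current running maximum is $k$ only the values $k,k+1,\dots,m$ can be returned next, i.e.\ that node has $m-k+1$ children. I would count the level-$t$ nodes by their last (equivalently, current-maximum) signal value $k$: the preceding $t-1$ signals form a non-decreasing sequence with values in $\{1,\dots,k\}$, and there are $\binom{k+t-2}{t-1}$ of those (multisets of size $t-1$ drawn from $k$ symbols). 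Summing over $k=1,\dots,m$ and $t=1,\dots,T$ gives the first expression for $|\mathcal{T}(m,T)|$; two applications of the hockey-stick identity, namely $\sum_{k=1}^{m}\binom{k+t-2}{t-1}=\binom{m+t-1}{t}$ and then $\sum_{t=1}^{T}\binom{m+t-1}{t}=\binom{m+T}{m}-1$, collapse it to $\binom{m+T}{m}-1$.

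I expect the main obstacle to be the rigorous bookkeeping in the counting step: justifying that one may index nodes by non-decreasing signal sequences, and observing that since $U$ is an arbitrary deterministic function two distinct sequences may or may not produce distinct models, so $\binom{m+T}{m}-1$ is really a worst-case \emph{upper} bound on the number of possible models --- which is exactly what the union bound needs. Getting the level conventions right (whether the root $H_0$ is counted, and that ``level $t$'' corresponds to $t-1$ feedback steps) is the other place to be careful. The Hoeffding-plus-union-bound step and the hockey-stick simplification are routine once these conventions are fixed.
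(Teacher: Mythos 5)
Your proposal is correct and follows essentially the same route as the paper's proof: a Hoeffding-plus-union bound over the nodes of the dependency tree, with the nodes counted by level $t$ and running-maximum signal $k$ to get $h(k,t)=\binom{k+t-2}{t-1}$ and total $\binom{m+T}{m}-1$. The only cosmetic difference is that you obtain $h(k,t)$ by a direct multiset count of non-decreasing feedback sequences (summing over $k$ first), whereas the paper derives it by induction on $t$ via the recursion $h(k,t+1)=\sum_{i\le k}h(i,t)$ and sums over $t$ first; both reduce to the same hockey-stick identities.
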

The proof is in Appendix~\ref{appendix:theory:proofs:uniform-incremental}.
\footnote{The proof is quite engaged but the
idea is simple: Count the number of tree nodes $h(k, t)$ with respect to (1) the overfitting signal $k$ returned by \meter and (2) the level $t$, and observe that $|\mathcal{T}(m,T)|=\sum\nolimits_{k=1}^{m}\sum\nolimits_{t=1}^{T} h(k, t).$ We can further show that $h(k,t)={k+t-2 \choose t-1}$.}
Compared to the regular meter, the size of the (model) dependency tree can be further pruned. Figure~\ref{fig:incremental-meter} illustrates this for the incremental meter when $m=2$ and $T=2$: Shadowed nodes are pruned with respect to the tree of the corresponding regular meter (Figure~\ref{fig:regular-meter}).

\vspace{0.5em}
\noindent
\textbf{(Comparison to Baseline)}
Compared to the \textbf{Resampling} baseline (Equation~\ref{eq:resampling}), the incremental meter (with $m=5$) can reduce the number of test examples from 380K to 66K ($\epsilon=0.01$, $\delta=0.01$, and $T=10$), a 5.8$\times$ improvement.

\section{Optimizations}\label{sec:optimization}

In the previous section, we adapt existing
techniques directly to \meter. However, these
techniques are developed for general
adaptive analysis without considering the specific
application scenario that \meter~is designed for.
We now describe a set of simple, but
novel optimizations that further
decrease the requirement of labels by \meter.

\subsection{Nonuniform Error Tolerance}\label{sec:optimization:non-uniform}

Our first observation is that a uniform
error tolerance $\epsilon$ for all signals, as was assumed
by all previous work~\cite{blum2015ladder, CI},
is perhaps an ``overkill'' -- when empirical overfitting is large, user might have
higher error tolerance (e.g.,
when validation accuracy and test accuracy are
already off by 20 points, user might not need to control
distributional overfitting to
a single precision point).
Our first optimization is then to extend
existing result to support
different $\epsilon_k$'s for different
signals $k\in[m]$, such that $\epsilon_1\leq\cdots\leq\epsilon_m$. This leads to the following
extensions of the results in Sections~\ref{sec:meter:regular} and~\ref{sec:meter:incremental}.


\begin{corollary}[Nonuniform, Regular Meter]\label{corollary:nonuniform-regular}
The test set size (in the adaptive setting) of the regular meter with non-uniform $\epsilon$ satisfies
\begin{equation}\label{eq:nonuniform}
    \frac{1}{m}\sum\nolimits_{k=1}^{m} 2\cdot |\mathcal{T}(m,T)|\cdot\exp \left( -2|D_{test}| \epsilon_k^2 \right)<\delta,
\end{equation}
where $|\mathcal{T}(m,T)|=\frac{m(m^T-1)}{m-1}$ remains the same as in Theorem~\ref{theorem:uniform}.
\end{corollary}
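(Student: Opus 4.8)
The plan is to follow the Hoeffding-plus-union-bound recipe behind Theorem~\ref{theorem:uniform} (Section~\ref{sec:meter:description-length}) and change only the final bookkeeping so that each signal is charged its own tolerance $\epsilon_k$. Recall that every model the developer can ever submit is a node of the dependency tree $\mathcal{T}(m,T)$, and crucially each such node $f$ is a \emph{fixed} function, committed to before $D_{test}$ is drawn; Hoeffding's inequality therefore gives $\Pr\big[|\Delta_f| > c\big] \le 2\exp\big(-2|D_{test}|c^2\big)$ for every threshold $c>0$. The bad event to rule out is that some submission $H_t$ has $|\Delta_{H_t}| > \epsilon_{I_t}$, where $I_t \in [m]$ is the signal returned for $H_t$; since $I_t$ is a deterministic function of the empirical overfitting $\Delta_{H_t}(D_{val},D_{test})$, and hence of $D_{test}$, it cannot simply be pulled outside the probability.

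The first step I would take is to inflate this bad event into a union over (node, signal) pairs: for a node $f$ at level $t$ and a value $k\in[m]$, let $E_{f,k}$ be the event ``$f$ is the step-$t$ submission, the signal returned for it is $k$, and $|\Delta_f| > \epsilon_k$.'' Then $\{\exists t:\ |\Delta_{H_t}| > \epsilon_{I_t}\} \subseteq \bigcup_f \bigcup_k E_{f,k}$, and because $E_{f,k} \subseteq \{|\Delta_f| > \epsilon_k\}$ we get $\Pr[E_{f,k}] \le 2\exp\big(-2|D_{test}|\epsilon_k^2\big)$ \emph{regardless} of how $I_f$ correlates with $\Delta_f$ --- this deliberately crude step is what lets the bound survive adaptivity. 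A union bound over all pairs, regrouped by signal value, gives a total of $\sum_{k=1}^m N_k\cdot 2\exp\big(-2|D_{test}|\epsilon_k^2\big)$, where $N_k$ counts the (node, step) pairs that can emit signal $k$. The combinatorial observation is that ``a node $f$ together with the signal $k$ it emits'' is in bijection with the edge from $f$ to its $k$-th child, so $N_k$ is just the number of internal nodes of $\mathcal{T}(m,T)$, namely $1+m+\dots+m^{T-1} = \frac{m^T-1}{m-1} = \frac1m|\mathcal{T}(m,T)|$, the same for every $k$. Substituting and requiring the total to be $<\delta$ produces exactly~\eqref{eq:nonuniform}, with the tree and its size $|\mathcal{T}(m,T)| = \frac{m(m^T-1)}{m-1}$ inherited unchanged from Theorem~\ref{theorem:uniform}.

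I expect the delicate point to be precisely this last accounting: one has to be careful that the (node, signal) pairs being unioned over account for every submission whose tolerance must be certified --- each exactly once --- so that the per-signal count comes out as $\frac1m|\mathcal{T}(m,T)|$ rather than the naive $|\mathcal{T}(m,T)|$. That extra factor $\frac1m$ is exactly the slack a uniform-$\epsilon$ analysis would discard, and it also furnishes the obvious sanity check: when $\epsilon_1=\dots=\epsilon_m=\epsilon$, the left-hand side of~\eqref{eq:nonuniform} collapses to $2|\mathcal{T}(m,T)|\exp(-2|D_{test}|\epsilon^2)$, recovering Theorem~\ref{theorem:uniform}. The incremental-meter version (and Corollary-style variants for the other optimizations) would follow the same template, with $\frac1m|\mathcal{T}(m,T)|$ replaced by the corresponding per-signal node count in the pruned tree of Section~\ref{sec:meter:incremental}.
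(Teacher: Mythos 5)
Your proposal is correct and follows essentially the same route as the paper: Hoeffding's inequality per fixed node of the dependency tree $\mathcal{T}(m,T)$, a union bound over all nodes regrouped by signal value, and the observation that each signal $k\in[m]$ accounts for exactly $|\mathcal{T}(m,T)|/m$ of the terms. The paper justifies that last count with a one-line appeal to symmetry (each $k$ labels an equal share of the tree nodes), whereas you make the same bookkeeping explicit via the bijection between (node, emitted signal) pairs and parent-to-$k$-th-child edges; this is a more careful rendering of the identical argument, not a different one.
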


The proof can be found in Appendix~\ref{appendix:theory:proofs:nonuniform}.
The basic idea is the same as that in Section~\ref{sec:meter:description-length}: We can use a tree $\mathcal{T}(m,T)$ to capture the dependencies between historical submissions and count the number of tree nodes (i.e., possible submissions).
However, in the nonuniform-$\epsilon$ case, we have to count the number of tree nodes for each individual overfitting signal $k\in[m]$ separately, since we need to apply the Hoeffding's inequality for each group of nodes $\mathcal{T}^{(k)}(m,T)$ corresponding to a particular $k$, with respect to $\epsilon_k$.
For the regular meter, it turns out that $|\mathcal{T}^{(k)}(m,T)|=\frac{1}{m}|\mathcal{T}(m,T)|$.

\begin{corollary}[Nonuniform, Incremental Meter]\label{corollary:nonuniform-incremental}
The test set size (in the adaptive setting) of the incremental meter with non-uniform $\epsilon$ satisfies
\begin{equation}\label{eq:nonuniform-incremental}
    \sum\nolimits_{k=1}^{m} 2\cdot | \mathcal{T}^{(k)} (m,T) | \cdot \exp\left(-2|D_{test}|\epsilon_k^2\right) <\delta,
\end{equation}
where
    \begin{align*}
        |\mathcal{T}^{(k)}(m,T)| = \sum \limits_{t =1}^{T}  { k + t -2\choose t-1} = {k + T -1 \choose k}.
    \end{align*}
\end{corollary}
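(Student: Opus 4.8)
The plan is to mirror the argument behind Theorem~\ref{theorem:uniform-incremental}, except that I will keep track of the value currently displayed by the incremental meter at each node of the dependency tree, so that the Hoeffding step can be applied group by group with the signal-specific tolerances $\epsilon_1\le\cdots\le\epsilon_m$. First I would refine the dependency tree $\mathcal{T}(m,T)$ for the incremental meter by encoding each node at level $t$ as a pair $(f_t,k)$, where $f_t$ is a realization of $H_t$ and $k=\mathbb{I}^{(t)}\in\{1,\dots,m\}$ is the running-maximum signal shown to the developer. Since $\mathbb{I}^{(t)}=\max_{j\le t}I_j$ is non-decreasing along every root-to-leaf path, a node $(f_t,k)$ can only have children $(f_{t+1},k')$ with $k'\ge k$ --- exactly the pruning already exploited in the uniform case. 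The possible submissions at step $t$ are contained in the first components of the level-$t$ nodes, so a union bound over $\mathcal{T}(m,T)$ dominates $\Pr[\exists t:\ |\Delta_{H_t}|>\epsilon_{\mathbb{I}^{(t)}}]$.

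Next I would partition $\mathcal{T}(m,T)=\bigsqcup_{k=1}^{m}\mathcal{T}^{(k)}(m,T)$, where $\mathcal{T}^{(k)}(m,T)$ collects every node over all levels $t=1,\dots,T$ whose meter value equals $k$, and count each block. A level-$t$ node with meter value $k$ is determined by the non-decreasing feedback sequence ending at value $k$ that produced it, and the number of such sequences is precisely the quantity $h(k,t)=\binom{k+t-2}{t-1}$ established in the proof of Theorem~\ref{theorem:uniform-incremental}. Summing over $t$ and applying the hockey-stick identity gives
\[
|\mathcal{T}^{(k)}(m,T)|=\sum_{t=1}^{T}\binom{k+t-2}{t-1}=\binom{k+T-1}{k},
\]
and as a consistency check $\sum_{k=1}^{m}\binom{k+T-1}{k}=\binom{m+T}{m}-1$ recovers $|\mathcal{T}(m,T)|$ from Theorem~\ref{theorem:uniform-incremental}.

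Finally I would carry out the Hoeffding-plus-union-bound step per block. At a node with meter value $k$, what must be certified is that the associated submission $f$ has distributional overfitting at most $\epsilon_k$; since $D_{test}$ is $|D_{test}|$ i.i.d.\ samples and $l$ is the $0$--$1$ loss, Hoeffding's inequality gives $\Pr[|\Delta_{f}|>\epsilon_k]\le 2\exp(-2|D_{test}|\epsilon_k^2)$ for each fixed such $f$. Taking the union bound over all nodes, grouped by $k$, yields
\[
\Pr\big[\exists t:\ |\Delta_{H_t}|>\epsilon_{\mathbb{I}^{(t)}}\big]\ \le\ \sum_{k=1}^{m}|\mathcal{T}^{(k)}(m,T)|\cdot 2\exp\!\left(-2|D_{test}|\epsilon_k^2\right),
\]
and requiring the right-hand side to be below $\delta$ is exactly~\eqref{eq:nonuniform-incremental}. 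A closed form for $|D_{test}|$ is not available because of the sum of exponentials, which is why the corollary states only the implicit inequality; a crude explicit bound follows by replacing every $\epsilon_k$ with $\epsilon_1$. Setting $\epsilon_1=\cdots=\epsilon_m=\epsilon$ collapses the sum to $|\mathcal{T}(m,T)|\cdot 2\exp(-2|D_{test}|\epsilon^2)<\delta$, recovering Theorem~\ref{theorem:uniform-incremental}.

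I expect the main obstacle to be the bookkeeping in the second paragraph: making the $(f_t,k)$ encoding precise enough that (i) $h(k,t)$ genuinely counts the level-$t$ nodes with meter value $k$ --- and not the number of \emph{distinct} submissions, which can be strictly smaller since a single $f_t$ may sit under several meter values --- and (ii) the per-block union bound still soundly covers the event ``the meter ever under-reports overfitting,'' given that the feedback handed to the developer is the running maximum $\mathbb{I}^{(t)}$ rather than a fresh per-step signal. Once the node set and its partition are pinned down, the hockey-stick identity and the Hoeffding estimate are routine.
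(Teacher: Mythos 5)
Your proposal is correct and follows essentially the same route as the paper: partition the dependency tree by the running-maximum signal value $k$, count the level-$t$ nodes in each block via $h(k,t)=\binom{k+t-2}{t-1}$ and sum to $|\mathcal{T}^{(k)}(m,T)|=\binom{k+T-1}{k}$, then apply Hoeffding with tolerance $\epsilon_k$ and a union bound block by block. The bookkeeping concern you raise is harmless, since the union bound over the (possibly over-counted) tree nodes remains a valid upper bound, which is exactly how the paper treats it.
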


The previous remark on the proof of Corollary~\ref{corollary:nonuniform-regular} can be applied to the nonuniform, incremental meter, too: We can compute $|\mathcal{T}^{(k)} (m,T)|$ and then apply the Hoeffding's inequality with respect to $\epsilon_k$, for each $k\in[m]$ separately.
The complete proof can be found in Appendix~\ref{appendix:theory:proofs:nonuniform-incremental}.
Note that the tree size $|\mathcal{T}(m,T) | = \sum \nolimits_{k=1}^{m} | \mathcal{T}^{(k)}(m,T) |$ is the same as that of Theorem~\ref{theorem:uniform-incremental}.



\paragraph*{Impact on Sample Complexity}
The difficulty of finding a closed form
solution for non-uniform $\epsilon$ makes
it challenging to directly compare this optimization
with those that we derived in Section~\ref{sec:meter}.
To better understand sample complexity of nonuniform meters, in the following we conduct an analysis based on the assumption that $|D_{test}|$ is dominated by $\epsilon_1$:\footnote{Given that $\epsilon_1\leq\cdots\leq\epsilon_m$ and the exponential terms that enclose the $\epsilon_k$'s, one can expect that the LHS sides of Equations~\ref{eq:nonuniform} and~\ref{eq:nonuniform-incremental} are dominated by the terms that contain $\epsilon_1$.}

\vspace{0.5em}
\noindent
\textbf{(Sample Complexity assuming $\epsilon_1$-Dominance)}
Specifically, for the nonuniform, regular meter, we have
\begin{equation}\label{eq:nonuniform:approx}
    |D_{test}| > \frac{1}{2\epsilon_1^2}\Big(\ln\frac{2(m^T-1)}{\delta(m-1)}\Big)\approx\frac{T\ln m +\ln\frac{2}{\delta(m-1)}}{2\epsilon_1^2}.
\end{equation}
On the other hand, for the nonuniform, incremental meter, we have
\begin{equation}\label{eq:nonuniform-incremental:approx}
    |D_{test}| > \frac{1}{2\epsilon_1^2}\Big(\ln\frac{2T}{\delta}\Big).
\end{equation}


\noindent
\textbf{(Improvement over Section~\ref{sec:meter})}
We now compare the sample complexity of the nonuniform meters to their uniform counterparts.
The nonuniform, regular meter can reduce sample complexity to $100K$ when setting $(\epsilon_1,\epsilon_2,\epsilon_3,\epsilon_4,\epsilon_5)$ to (0.01, 0.02, 0.03, 0.04, 0.05), $\delta=0.01$, and $T=10$, compared to $108K$ with a uniform error tolerance $\epsilon = 0.01 \equiv \epsilon_1$.
We do not observe significant improvement, though.
In fact, if we compare Equation~\ref{eq:nonuniform:approx} with Equation~\ref{eq:uniform:regular}, the improvement is upper bounded by $1+\frac{\ln m}{\ln(2/\delta)}$. For $m=5$ and $\delta=0.01$, it means that the best improvement would be $1.3\times$ regardless of $T$.
Nonetheless, one can increase either $m$ or $\delta$ to boost the expected improvement.
On the other hand, the nonuniform, incremental meter can further reduce sample complexity from $66K$ to $38K$ (which matches Equation~\ref{eq:independent}, the ideal sample complexity when all submissions are independent), a $1.75\times$ improvement.

\subsection{Multitenancy}\label{sec:optimization:multitenancy}

Our second observation is that, when multiple users are having access to the same meter, it is possible to
decrease the requirement on the number of examples if {\em we assume that these users do not communicate with each other.}
Multitenancy is a natural requirement in practice given that developing ML applications is usually team work.
We implemented a multitenancy management subsystem to enable concurrent access to the meter from different users.

\begin{figure}
\centering
\includegraphics[width=0.45\textwidth]{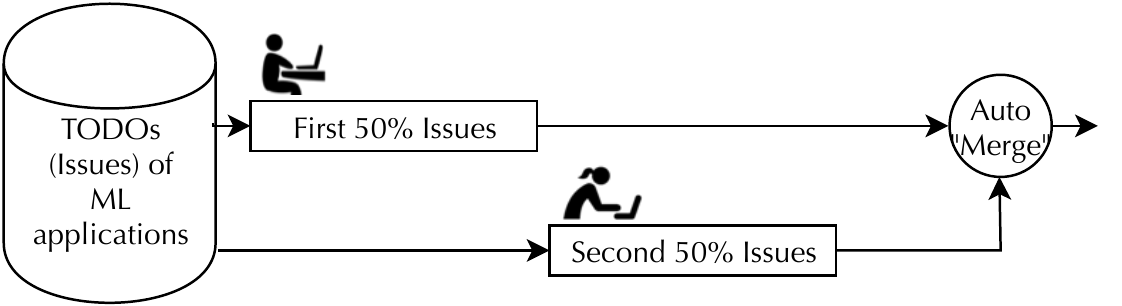}
\caption{Multitenancy in \meter.}
\label{fig:multitenancy}
\end{figure}

Figure~\ref{fig:multitenancy} illustrates the specific multitenancy scenario targeted by \meter.
Unlike traditional multitenancy setting where multiple users access system simultaneously, our multitenancy scenario captures more of a collaboration pattern between two developers where one starts from the checkpoint made by the other, similar to the well-known ``\texttt{git branch}'' and ``\texttt{git merge}'' development pattern (if we have to make an analogy).

An interesting, perhaps counter-intuitive observation is that multitenancy can
further \emph{reduce} the desired size of the test set. We illustrate this using
a simple, two-tenant case where there are only two developers, each working on
$\frac{T}{2}$ steps. We provide the result for more than two developers in
Appendix~\ref{appendix:multitenant-meter}.
We focus on nonuniform meters in our discussion, as uniform meters (i.e., with a single $\epsilon$) can be viewed as special cases.

\vspace{0.5em}
\noindent
\textbf{(Two Tenants in Nonuniform, Regular Meter)}
Consider the regular meter first. Since each tenant has $\frac{T}{2}$ development cycles, it follows from
Corollary~\ref{corollary:nonuniform-regular} that, in the presence of two tenants,
\begin{align}\label{eq:bitenant-multistep:regular}
\delta &>  \frac{1}{m}\sum\nolimits_{k=1}^{m} 4 \cdot  | \mathcal{T}^{k}(m,T/2)|\cdot \exp(-2 | D_{test}| \epsilon_k^2) \\\nonumber
&=  \sum\nolimits_{k=1}^{m} 4 \cdot\frac{m^{T/2}-1}{m-1} \cdot \exp(-2 | D_{test}| \epsilon_k^2)\,.
\end{align}

\noindent
\textbf{(Two Tenants in Nonuniform, Incremental Meter)}
Similarly, for the incremental meter, it follows from
Corollary~\ref{corollary:nonuniform-incremental} that, in the presence of two tenants,
\begin{align}\label{eq:bitenant-multistep:incremental}
\delta &>  \frac{1}{m}\sum\nolimits_{k=1}^{m} 4 \cdot  | \mathcal{T}^{k}(m,T/2)|\cdot \exp(-2 | D_{test}| \epsilon_k^2) \\ \nonumber
&=  \sum\nolimits_{k=1}^{m} 4 \cdot\binom{k + \frac{T}{2}-1}{k} \cdot \exp(-2 | D_{test}| \epsilon_k^2) \, .
\end{align}


\paragraph*{Intuition}
This result may be counter-intuitive at a first glance.
One might wonder what the fundamental difference is between multiple
developers and a single developer, given our two-tenancy setting.
The observation here is that the second developer can \emph{forget}
about the ``development history'' made by the first developer, since
it is irrelevant to her own development (see Figure~\ref{fig:multitenancy-example} for an example when $m=2$ and $T=3$).

\paragraph*{Impact on Sample Complexity}
We can illustrate the impact of multi-tenancy
by again analyzing sample complexity under the $\epsilon_1$-dominance assumption:

\vspace{0.5em}
\noindent
\textbf{(Sample Complexity Assuming $\epsilon_1$-Dominance)}
Specifically, for the nonuniform, regular meter, we have
\begin{equation}\label{eq:two-tenancy:regular:approx}
    |D_{test}| > \frac{1}{2\epsilon_1^2}\Big(\ln\frac{4(m^{T/2}-1)}{\delta(m-1)}\Big)\approx\frac{\frac{T}{2}\ln m +\ln\frac{4}{\delta(m-1)}}{2\epsilon_1^2}.
\end{equation}
On the other hand, for the nonuniform, incremental meter, we have
\begin{equation}\label{eq:two-tenancy:incremental:approx}
    |D_{test}| > \frac{1}{2\epsilon_1^2}\Big(\ln\frac{2T}{\delta}\Big).
\end{equation}

\noindent
\textbf{(Improvement over Section~\ref{sec:optimization:non-uniform})}
We now compare the sample complexity of the two-tenancy meters to the single-tenancy ones.
The two-tenancy, regular meter can reduce the number of test examples from $100K$ to $71K$ when setting $(\epsilon_1,...,\epsilon_5)$ = (0.01, 0.02, 0.03, 0.04, 0.05), $\delta=0.01$, and $T=10$, a $1.4\times$ improvement.
One can indeed achieve asymptotically $2\times$ improvement as $T$ increases (see Section~\ref{sec:evaluation:optimizations:multitenancy}).
On the other hand, the two-tenancy, incremental meter cannot further improve the sample complexity.
This is not surprising, as Equation~\ref{eq:two-tenancy:incremental:approx} and Equation~\ref{eq:nonuniform-incremental:approx} are exactly the same.
In fact, both have matched Equation~\ref{eq:independent}, which represents the ideal sample complexity when all submissions are independent.

\begin{figure}[t]
\centering
	\subfigure[Single Tenant]{%
	    \includegraphics[width=0.2\textwidth]{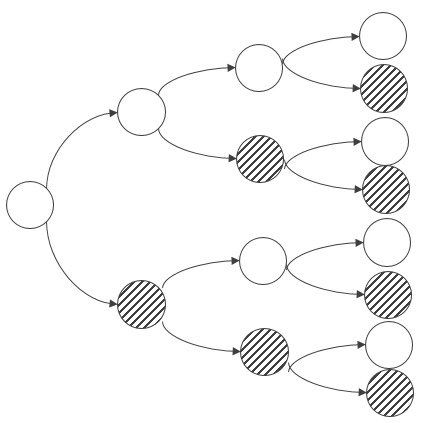}%
	    \label{fig:single-developer}%
	    }
	\subfigure[Two Tenants]{%
        \includegraphics[width=0.15\textwidth]{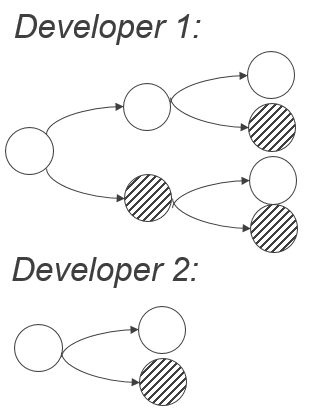}%
		\label{fig:two-developers}%
	}
\caption{Comparison between single-/multi- tenancy.}
\label{fig:multitenancy-example}
\end{figure}

\subsection{``Time Travel''}\label{sec:optimization:time-travel}
Our third observation is that it is a natural action for developers to \emph{revert} to a previous step once an overfitting signal is observed, i.e., ``traveling back in timeline.''
In practice, it makes little sense to revert to older steps except for the latest one prior to the one that resulted in overfitting.
Therefore, we only consider the case of taking one step back, just like what the ``\texttt{git revert HEAD}'' command does.
Intuitively, ``time travel'' permits ``regret'' in development history.
As a result, we can use a smaller test set to support the same number of development steps.
In the following, we present a formal analysis that verifies this intuition.


\paragraph*{Analysis}
For ease of exposition, we start by assuming one single budget for ``time travel,'' i.e., only one reversion is allowed in development history.
Again, we are interested in the total number of possible model submissions.
Suppose that user decides to revert at step $t$.
One can decompose the entire procedure into three phases:
(1) keep submitting models until step $t$; (2) revert and go back to step $t-1$;
(3) continue submitting $T-t$ times until step $T$.
For each $\epsilon_k$, the number of possible submissions in the three phases are then (1) $|\mathcal{T}^{(k)}(m,t)|$, (2) $-|\mathcal{T}^{(k)}(m,t-1)|$, and (3) $|\mathcal{T}^{(k)}(m,(t-1)+(T-t))| = |\mathcal{T}^{(k)}(m,T-1)|$.


It is straightforward to generalize the results to $B\geq 1$ budgets.
Suppose that user decides to revert at steps $t_1\leq t_2\leq\cdots\leq t_B$.
Both phases (1) and (2) can repeat for each $t_i$, though each $t_i$ should be replaced by $t'_i=t_i-(i-1)$ to accommodate the ``time shift'' effect due to ``time travel.'' Phase (3) follows afterwards.
Therefore, the total number of possible submissions is
\begin{equation*}
    \sum\nolimits_{i=1}^{B}\Big(|T^{(k)}(m, t'_i)| - |T^{(k)}(m, t'_i-1)|\Big) + |T^{(k)}(m, T-B)|.
\end{equation*}

\noindent
\textbf{(``Time Travel'' in Nonuniform, Regular Meter)}
By Corollary~\ref{corollary:nonuniform-regular}, the total number of possible submissions is
\begin{equation}\label{eq:time-travel:regular}
|\mathcal{R}^{(k)}|= \frac{(m^{T-B}-1)}{m-1} + \sum\nolimits_{i=1}^{B}m^{t'_i-1}.
\end{equation}
The test set size therefore satisfies
\begin{equation}\label{eq:time-travel:regular:sample-size}
    \sum\nolimits_{k=1}^{m} 2\cdot |\mathcal{R}^{(k)}| \cdot \exp\left(-2|D_{test}|\epsilon_k^2\right) <\delta.
\end{equation}

\vspace{0.5em}
\noindent
\textbf{(``Time Travel'' in Nonuniform, Incremental Meter)}
By Corollary~\ref{corollary:nonuniform-incremental}, the total number of possible submissions is
\begin{equation}\label{eq:time-travel:incremental}
|\mathcal{I}^{(k)}|= { k+(T-B)-1 \choose k } + \sum\nolimits_{i=1}^{B}{ k+t'_i-2 \choose k-1 }.
\end{equation}
The test set size therefore satisfies
\begin{equation}\label{eq:time-travel:incremental:sample-size}
    \sum\nolimits_{k=1}^{m} 2\cdot |\mathcal{I}^{(k)}| \cdot \exp\left(-2|D_{test}|\epsilon_k^2\right) <\delta.
\end{equation}

\paragraph*{Impact on Sample Complexity}
As before, we illustrate the impact of ``time travel'' by analyzing sample complexity under the $\epsilon_1$-dominance assumption.

\vspace{0.5em}
\noindent
\textbf{(Sample Complexity Assuming $\epsilon_1$-Dominance)}
Specifically, for the nonuniform, regular meter, we have
\begin{equation}\label{eq:time-travel:regular:approx}
    |D_{test}| > \frac{1}{2\epsilon_1^2}\ln\Big(\frac{2}{\delta}\cdot\Big(\frac{m^{T-B}-1}{m-1} + \sum\nolimits_{i=1}^{B}m^{t'_i-1}\Big)\Big).
\end{equation}
On the other hand, for the nonuniform, incremental meter, \emph{regardless of} the choice of $B$ and $t_1$ to $t_B$, we have
\begin{equation}\label{eq:time-travel:incremental:approx}
    |D_{test}| > \frac{1}{2\epsilon_1^2}\ln\Big(\frac{2T}{\delta}\Big).
\end{equation}

\noindent
\textbf{(Improvement over Section~\ref{sec:optimization:non-uniform})}
We now compare the sample complexity of the ``time travel'' meters to the ones in Section~\ref{sec:optimization:non-uniform}.
We set $B=3$ and $(t_1,t_2,t_3)$ = (1, 2, 3).
The ``time travel'' regular meter can reduce sample complexity from $100K$ to $76K$ when setting $(\epsilon_1,...,\epsilon_5)$ = (0.01, 0.02, 0.03, 0.04, 0.05), $\delta=0.01$, and $T=10$, a $1.32\times$ improvement.
We can further improve the sample complexity by increasing the ``time travel'' budget --- for $T=10$ we can achieve an improvement up to $2.64\times$ (see Section~\ref{sec:evaluation:optimizations:time-travel}).
We do not see improvement for the incremental meters under the $\epsilon_1$-dominance assumption, though, as Equation~\ref{eq:time-travel:incremental:approx} already matches the lower bound in Equation~\ref{eq:independent}.


\section{Experimental Evaluation}
\label{sec:evaluation}


We report experimental evaluation results in this section.
Our evaluation covers the following aspects of \meter:
\begin{itemize}
    \item How effective are the regular meter and incremental meter in Section~\ref{sec:meter}, compared with the \textbf{Resampling} baseline in Section~\ref{sec:preliminaries:adaptive}? --- see Section~\ref{sec:evaluation:meter}.
    \item How effective are the optimization techniques in Section~\ref{sec:optimization}, compared with the basic regular meter and incremental meter in Section~\ref{sec:meter}? --- see Section~\ref{sec:evaluation:optimizations}.
    \item How can \meter~be fit into real ML application development lifecycle management? --- see Section~\ref{sec:evaluation:case-studies}.
\end{itemize}
We compare the participated techniques in terms of their induced sample complexity (i.e., the amount of human labels required), under various parameter settings that are typical in practice.





\subsection{Meters vs. Baselines}\label{sec:evaluation:meter}


\paragraph*{Baseline}
We have presented the \textbf{Resampling} baseline in Section~\ref{sec:preliminaries:adaptive} in the context of a single $\epsilon$, which can only serve as a baseline for uniform meters.
It is straightforward to extend it to the nonuniform case, though: Given that $\epsilon_1 \leq \cdots \leq\epsilon_m$, the required sample size in each single step is bounded by $\ln(2T/\delta)/(2\epsilon_1^2).$
Hence, the total number of samples $n_{tot}$ satisfies
\begin{equation}\label{eq:naive-multistep}
     | D_{test,tot}|  > T\cdot\frac{\ln(2T/\delta)}{2\epsilon_1^2}.
\end{equation}

\paragraph*{Computation of Sample Size for Nonuniform Meters}
Equations~\ref{eq:nonuniform} and~\ref{eq:nonuniform-incremental} are algebraically unsolvable for the sample size $n= |D_{test}|$ unless $m = 1$.
It is, however, possible to find the correct $n = |D_{test}|$ by bounding $1 < n  < \infty$ and using binary search to find the desired $\delta$ and $n$.



\paragraph*{Parameter Settings}
In our experiments, we choose $\delta$ from the set $\{0.001, 0.005, 0.01, 0.05, 0.1\}$, which consists of common confidence thresholds encountered in practice.
We choose $m$ from $\{5, 10, 20, 50\}$ and vary $T$ from 10 to 100.

\begin{figure}
\centering
	\subfigure[ $\delta=0.05$ and $\epsilon=0.05$]{
        \includegraphics[clip, trim=1cm 2cm 1cm 1cm, width=0.44\textwidth]{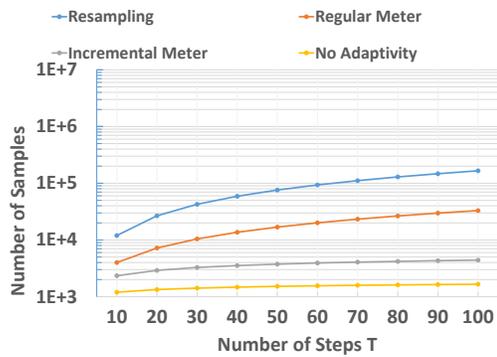}
        \label{fig:uniform-delta5-epsilon5}
	}
	\subfigure[$\delta=0.01$ and $\epsilon=0.05$]{
        \includegraphics[clip, trim=1cm 2cm 1cm 1cm, width=0.44\textwidth]{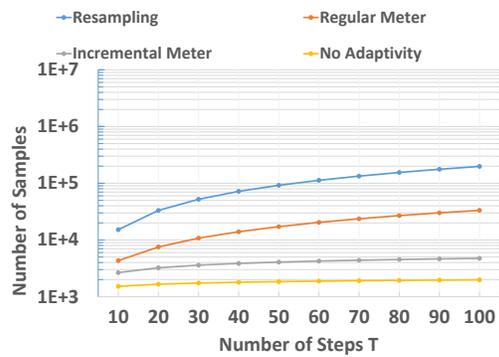}
        \label{fig:uniform-delta1-epsilon5}
	}
	\subfigure[$\delta=0.05$ and $\epsilon=0.01$]{
        \includegraphics[clip, trim=1cm 2cm 1cm 1cm, width=0.44\textwidth]{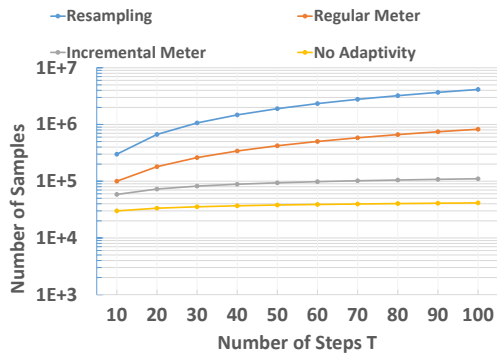}
        \label{fig:uniform-delta5-epsilon1}
	}
	\subfigure[$\delta=0.01$ and $\epsilon=0.01$]{
        \includegraphics[clip, trim=1cm 2cm 1cm 1cm, width=0.44\textwidth]{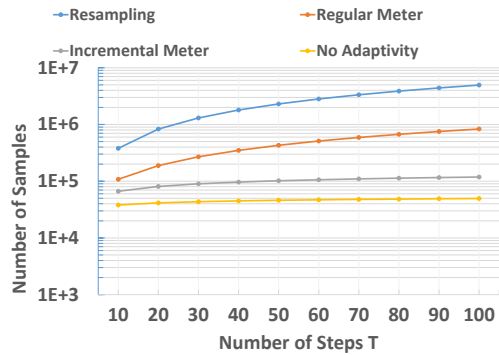}
        \label{fig:uniform-delta1-epsilon1}
	}
\caption{Comparison of uniform meters with baseline approaches when setting $m=5$ and varying $T$ from 10 to 100.}
\label{fig:uniform-meters}
\end{figure}

\paragraph*{Evaluation Results}
We choose $\epsilon$ from $\{0.01, 0.05, 0.1\}$. Figure~\ref{fig:uniform-meters} compares the uniform meters with the baseline approaches when setting $m=5$, $\delta=\{0.01, 0.05\}$, and $\epsilon=\{0.01, 0.05\}$.
In each of the four subfigures, we plot the number of samples required by the baseline (``\textbf{Resampling}''), the regular meter (``\textbf{Regular Meter}''), the incremental meter (``\textbf{Incremental Meter}''), and the \emph{ideal} case where the submissions are independent (``\textbf{No Adaptivity}''), respectively. Note that the $y$-axis is in log scale.

\vspace{0.5em}
\noindent
\textbf{(Impact of $T$)}
We have the following observations on the impact of $T$, regardless of the choices for $\delta$ and $\epsilon$:
\begin{itemize}
    \item As $T$ increases, the number of samples required by each participant approach increases.
    \item However, the speeds of growth differ dramatically: Both \textbf{Regular Meter} and \textbf{Incremental Meter} grow much more slowly than \textbf{Resampling}, and moreover, \textbf{Incremental Meter} grows much more slowly than \textbf{Regular Meter}. In fact, based on Equations~\ref{eq:resampling},~\ref{eq:uniform:regular}, and~\ref{eq:uniform:incremental}, we can show that the sample size required by \textbf{Resampling}, \textbf{Regular Meter}, and \textbf{Incremental Meter} are $O(T\ln T)$, $O(T)$, and $O(\ln T)$, respectively, with respect to $T$.\footnote{We need to apply Stirling's approximation (to Equation~\ref{eq:uniform:incremental}) to obtain the $O(\ln T)$ result for \textbf{Incremental Meter}.}
    \item Although both \textbf{Incremental Meter} and \textbf{No Adaptivity} grow at the rate of $O(\ln T)$, there is still visible gap between them, which indicates opportunity for further improvement.
\end{itemize}

\vspace{0.5em}
\noindent
\textbf{(Impact of $\delta$)} By comparing Figure~\ref{fig:uniform-delta5-epsilon5} and Figure~\ref{fig:uniform-delta1-epsilon5}, where we keep $\epsilon=0.05$ unchanged but vary $\delta$ from 0.05 to 0.01, we observe that the sample complexity only slightly increases (not quite noticible given that the $y$-axis is in log scale).
Comparing Figure~\ref{fig:uniform-delta5-epsilon1} and Figure~\ref{fig:uniform-delta1-epsilon1} leads to the same observation.
This is understandable, as the impact of $\delta$ on the sample complexity of all participant approaches is (the same) $O(\ln\frac{1}{\delta})$.

\vspace{0.5em}
\noindent
\textbf{(Impact of $\epsilon$)} On the other hand, the impact of $\epsilon$ on the sample complexity of all participant approaches is much more significant. This can be evidenced by comparing Figure~\ref{fig:uniform-delta5-epsilon5} and Figure~\ref{fig:uniform-delta5-epsilon1}, where we keep $\delta=0.05$ but change $\epsilon$ from 0.05 to 0.01.
As we can see, the sample complexity increases by around 25$\times$!
This observation remains true if we compare Figure~\ref{fig:uniform-delta1-epsilon5} and Figure~\ref{fig:uniform-delta1-epsilon1}.
The rationale is simple -- the impact of $\epsilon$ on the sample complexity is (the same) $O(\frac{1}{\epsilon^2})$ for all approaches.

\vspace{0.5em}
\noindent
\textbf{(Impact of $m$)} Figure~\ref{fig:uniform-varying-m} further compares the sample complexity of the uniform meters when fixing $\delta=0.01$ and $\epsilon=0.01$. Figures~\ref{fig:uniform-m5} and~\ref{fig:uniform-m10} depict results when $m=5$ and $m=10$, respectively. (The $y$-axis is in log scale.) We see that sample complexity increases for both \textbf{Regular Meter} and \textbf{Incremental Meter}.
In fact, we can show that both meters actually grow at the (same) rate $O(\ln m)$ with respect to $m$.\footnote{Again, we need to apply Stirling's approximation (to Equation~\ref{eq:uniform:incremental}) to obtain the $O(\ln m)$ result for \textbf{Incremental Meter}.}

\begin{figure}
\centering
	\subfigure[ $m=5$]{
        \includegraphics[clip, trim=1cm 2cm 1cm 1cm, width=0.44\textwidth]{figures/uniform-delta1-epsilon1.pdf}
        \label{fig:uniform-m5}
	}
	\subfigure[$m=10$]{
        \includegraphics[clip, trim=1cm 2cm 1cm 1cm, width=0.44\textwidth]{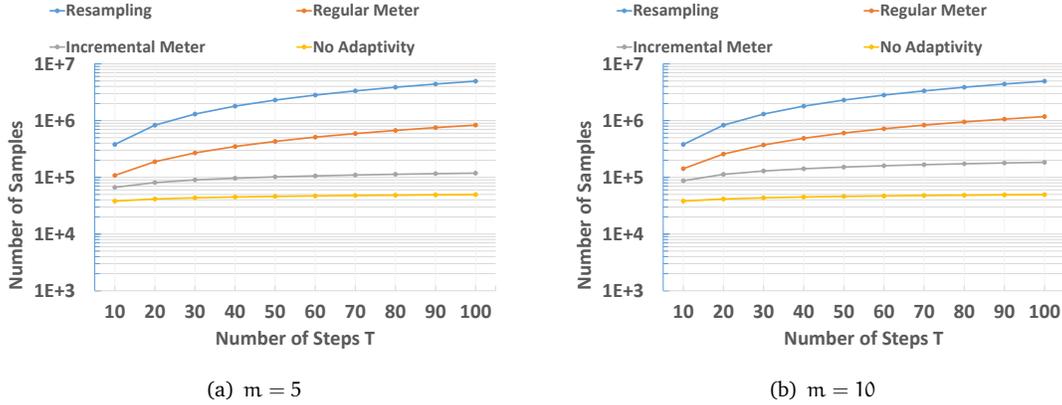}
        \label{fig:uniform-m10}
	}
\caption{Comparison of uniform meters with different $m$'s when setting $\delta=0.01$ and $\epsilon=0.01$, and varying $T$ from 10 to 100.}
\label{fig:uniform-varying-m}
\end{figure}

\subsection{Optimizations for Meters}\label{sec:evaluation:optimizations}

We next evaluate the effectiveness of the optimization techniques presented in Section~\ref{sec:optimization}: (1) nonuniform $\epsilon$'s and (2) multitenancy.

\subsubsection{Nonuniform Error Tolerance}

Figure~\ref{fig:nonuniform-meters} compares nonuniform meters when setting $(\epsilon_1,...,\epsilon_5)$ $=$ (0.01,0.02,0.03,0.04,0.05) with uniform ones.
For uniform meters and the two baselines \textbf{Resampling} and \textbf{No Adaptivity}, we set $\epsilon=\epsilon_1=0.01$.
We observe the following regardless of $\delta$:
\begin{itemize}
    \item The nonuniform regular meter (``\textbf{Regular, Nonuniform}'') slightly improves over \textbf{Regular Meter}, but not much.
    This is not surprising, though, if we compare Equation~\ref{eq:nonuniform:approx} with Equation~\ref{eq:nonuniform} --- for a given $m$, the sample complexity of both meters grows at the rate $O(T)$.
    \item On the other hand, the nonuniform incremental meter (``\textbf{Incremental, Nonuniform}'') significantly improves over its uniform counterpart \textbf{Incremental Meter}. Again, we can verify this by comparing Equation~\ref{eq:nonuniform-incremental:approx} with Equation~\ref{eq:nonuniform-incremental} --- the uniform version has sample complexity $O(m\ln T)$ whereas the nonuniform version has sample complexity $O(\ln T)$.
    \item The sample complexity of the nonuniform incremental meter is close to that of \textbf{No Adaptivity} (i.e., the ideal case). In fact, their sample complexity would be the same if we assume $\epsilon_1$-dominance for the nonuniform incremental meter. To verify, compare Equation~\ref{eq:nonuniform-incremental:approx} with Equation~\ref{eq:independent}.
\end{itemize}
In our experiments, we have tested other settings for $(\epsilon_1,...,\epsilon_5)$ and the observations remain valid.

\begin{figure}
\centering
	\subfigure[ $\delta=0.05$]{
        \includegraphics[clip, trim=1cm 2cm 1cm 1cm, width=0.44\textwidth]{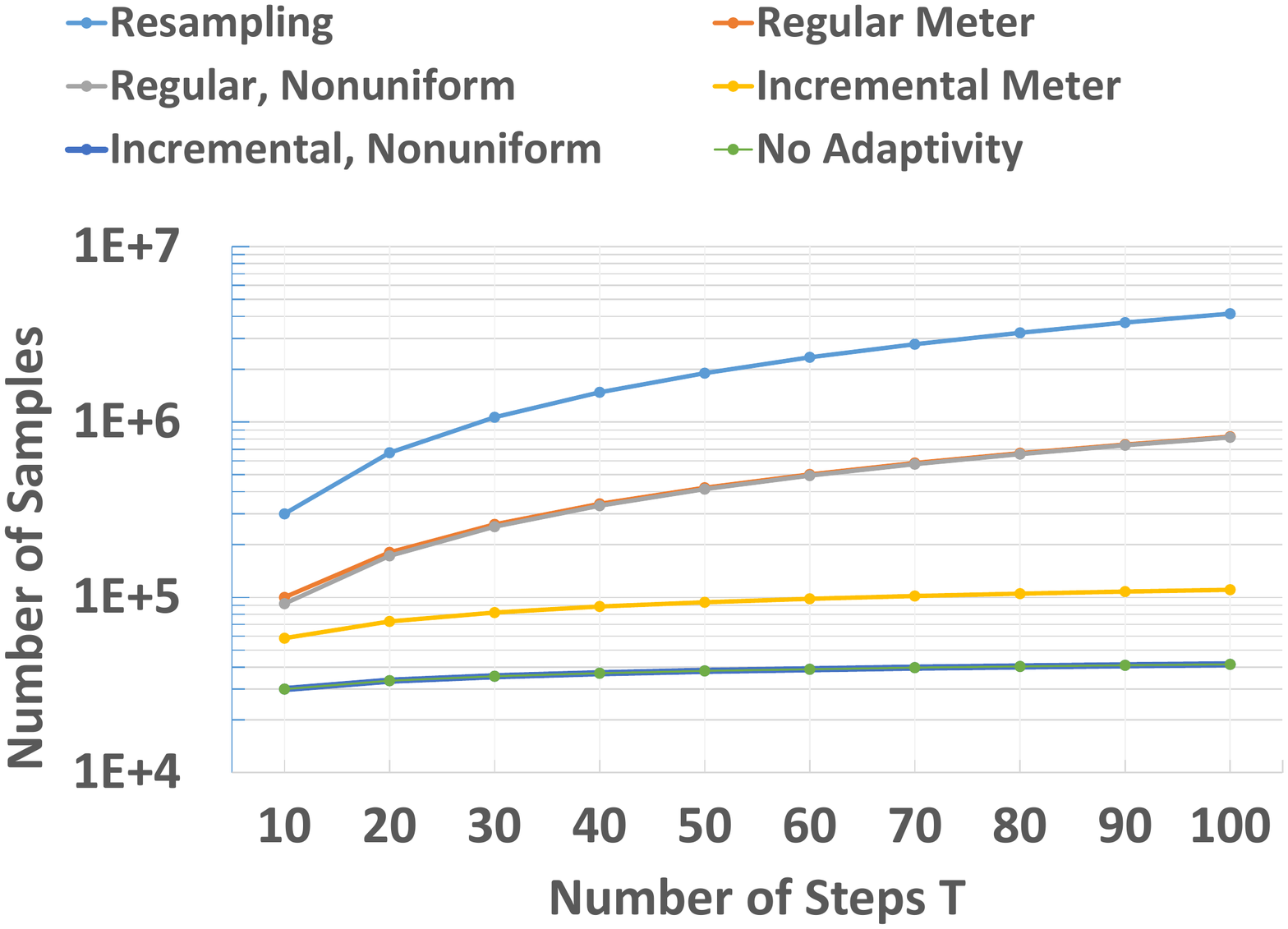}
        \label{fig:nonuniform-delta5}
	}
	\subfigure[$\delta=0.01$]{
        \includegraphics[clip, trim=1cm 2cm 1cm 1cm, width=0.44\textwidth]{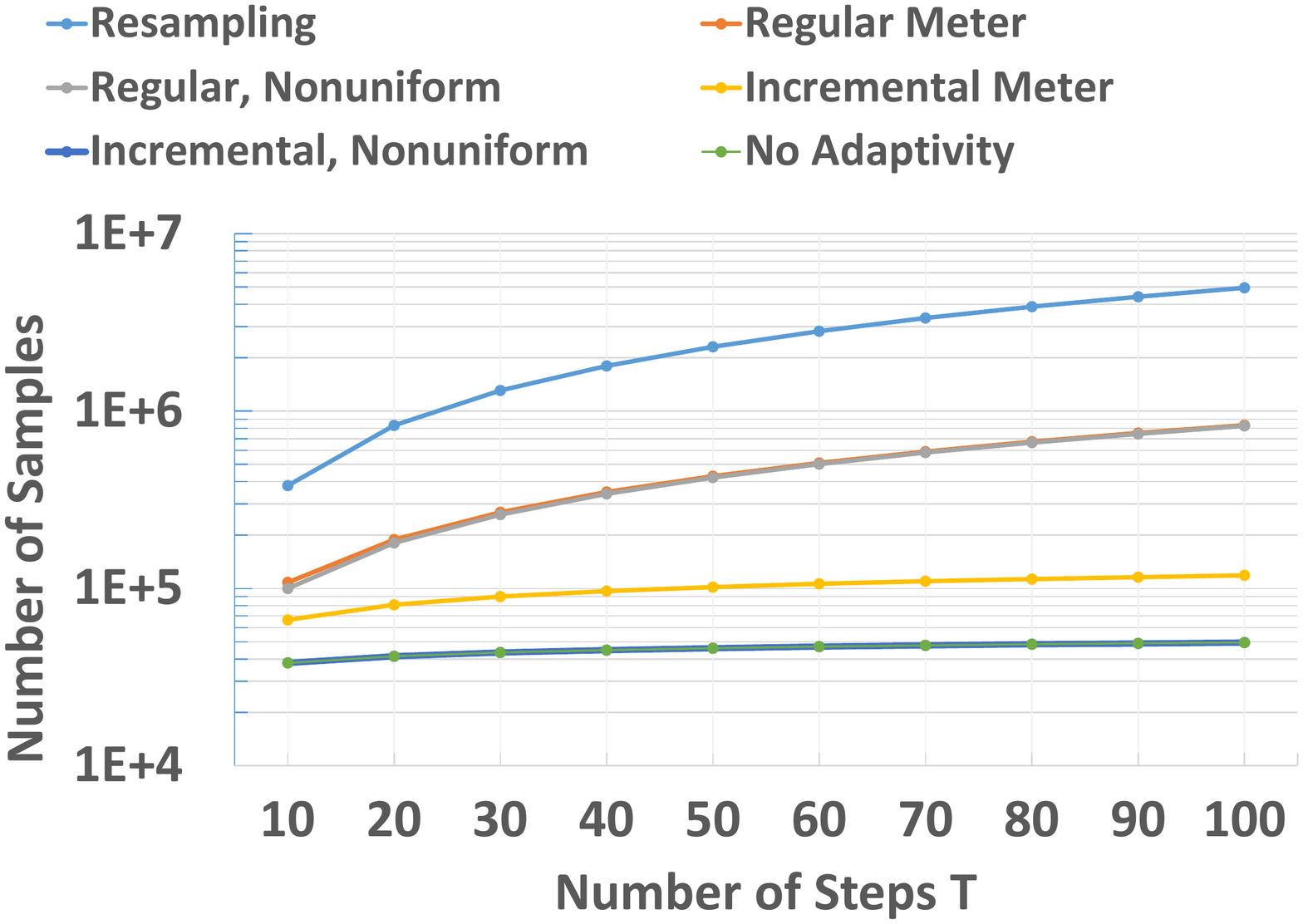}
        \label{fig:nonuniform-delta1}
	}
\caption{Comparison of nonuniform meters with uniform ones when setting $(\epsilon_1,...,\epsilon_5)=(0.01,0.02,0.03,0.04,0.05)$.}
\label{fig:nonuniform-meters}
\end{figure}

\subsubsection{Multitenancy}\label{sec:evaluation:optimizations:multitenancy}

We next evaluate the sample complexity of the multitenancy setting (specifically, two-tenancy setting) presented in Section~\ref{sec:optimization:multitenancy}.
We focus on nonuniform meters by setting $(\epsilon_1,...,\epsilon_5)$ $=$ $(0.01,0.02,0.03,0.04,0.05)$ and $\delta=0.01$.

Figure~\ref{fig:two-tenancy-meters} presents the results.
We observe that the two-tenancy regular meter further improves significantly over its single-tenancy counterpart.
In fact, it roughly improves by 2$\times$, which can be verified by comparing Equation~\ref{eq:two-tenancy:regular:approx} with Equation~\ref{eq:nonuniform:approx}.
In contrast, the two-tenancy incremental meter does not improve over the single-tenancy one.
This is understandable, though, if we look at Equation~\ref{eq:two-tenancy:incremental:approx} and Equation~\ref{eq:nonuniform-incremental:approx} --- they are just the same.
In fact, both of them have matched the sample complexity of \textbf{No Adaptivity} (i.e., Equation~\ref{eq:independent}), which represents the ideal case where all submissions are independent of each other.

\begin{figure}[t]
\centering
\includegraphics[clip, trim=1cm 2cm 1cm 1cm, width=0.44\textwidth]{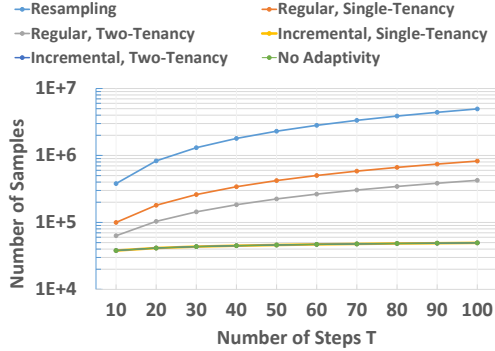}
\caption{Comparison of two-tenancy vs. single-tenancy meters, $(\epsilon_1,...,\epsilon_5)$ $=$ (0.01,0.02,0.03,0.04,0.05) and $\delta=0.01$.}
\label{fig:two-tenancy-meters}
\end{figure}

\subsubsection{``Time Travel''}\label{sec:evaluation:optimizations:time-travel}

We further evaluate the sample complexity of the ``time travel'' setting presented in Section~\ref{sec:optimization:time-travel} where users are allowed to take reversions during development history.
Again, we focus on nonuniform meters by setting $(\epsilon_1,...,\epsilon_5)$ $=$ $(0.01,0.02,0.03,0.04,0.05)$ and $\delta=0.01$.
In our evaluation, we tested different settings for $T$ and $B$.
For a given $T$, we varied the ``time travel'' budget $B$ from 1 to $T$, and set $(t_1,...,t_B)$ $=$ $(1, ..., B)$.

\begin{figure}
\centering
	\subfigure[ $T=5$]{
        \includegraphics[clip, trim=1cm 2cm 1cm 1cm, width=0.44\textwidth]{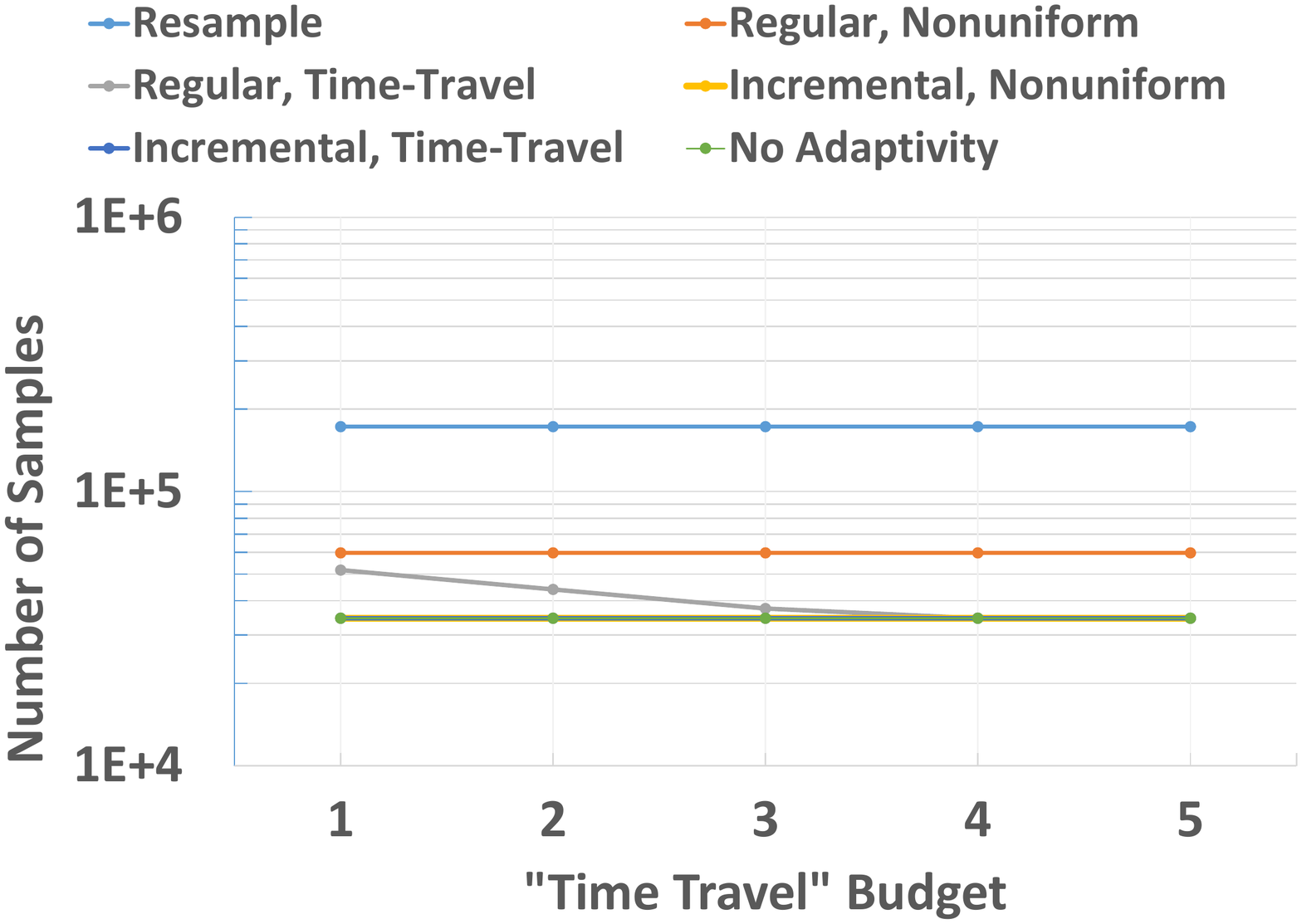}
        \label{fig:time-travel-T5}
	}
	\subfigure[$T=10$]{
        \includegraphics[clip, trim=1cm 2cm 1cm 1cm, width=0.44\textwidth]{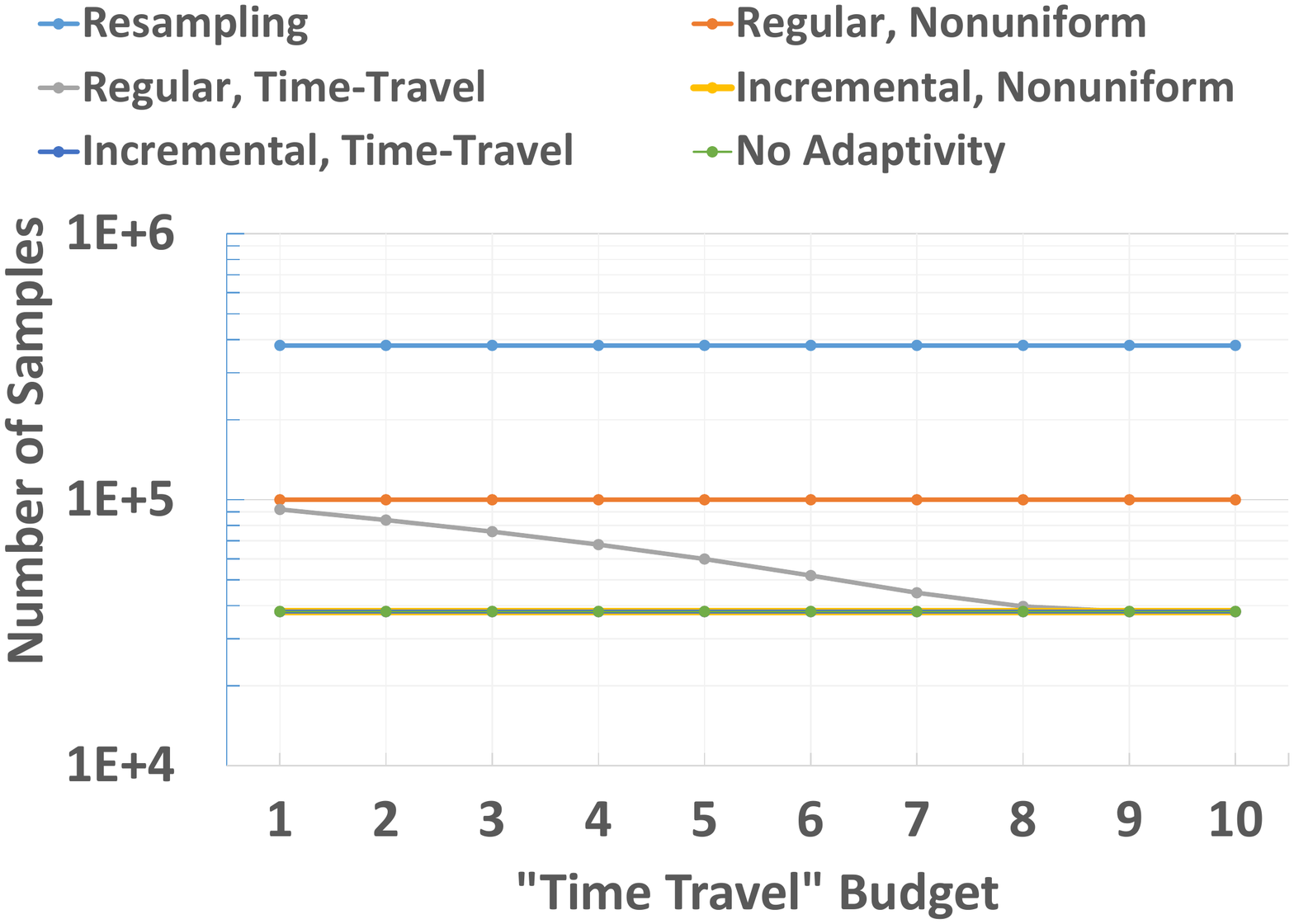}
        \label{fig:time-travel-T10}
	}
\caption{Comparison of meters with and without ``time travel,'' setting $(\epsilon_1,...,\epsilon_5)$ $=$ $(0.01,0.02,0.03,0.04,0.05)$ and $\delta=0.01$.}
\label{fig:time-travel-meters}
\end{figure}

Figure~\ref{fig:time-travel-meters} summarizes the results when setting (1) $T=5$; and (2) $T=10$.
We observe that the ``time travel'' regular meter further improves significantly over its normal counterpart.
Moreover, the improvement increases as we increase the ``time travel'' budget.
In fact, the sample complexity converges to its lower bound -- the (ideal) sample complexity of \textbf{No Adaptivity} (i.e., Equation~\ref{eq:independent}) -- as $B$ increases to $T$.
In contrast, the ``time travel'' incremental meter does not improve over the normal one.
This is understandable, though, as both of them have already matched the lower bound.

\subsection{Meter In Action: A Revisit} \label{sec:evaluation:case-studies}

We now revisit the two ML applications presented in Section~\ref{sec:sys:use-cases} and study the sample complexity if \meter~were integrated into their development lifecycles, under various parameter settings. We set $T=8$ as both applications involve 8 development steps.
For nonuniform meters, we further set $(\epsilon_1,...,\epsilon_5)=(0.01,0.02,0.03,0.04,0.05)$, and we set $\epsilon=\epsilon_1=0.01$ for the other approaches.
We vary $\delta$ from 0.1 to 0.01, which corresponds to varying reliability/confidence $1-\delta$ from 0.9 to 0.99.

\begin{figure}[t]
\centering
\includegraphics[clip, trim=1cm 2cm 1cm 1cm, width=0.42\textwidth]{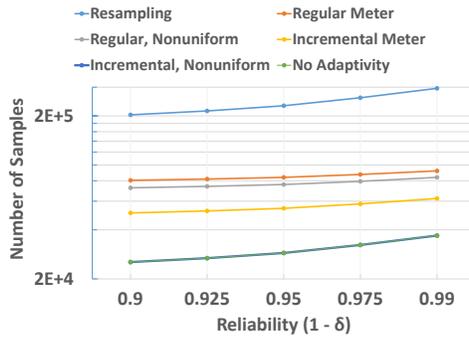}
\caption{Integration of \meter~into real-world ML application development lifecycles in Section~\ref{sec:sys:use-cases}.}
\label{fig:case-study-meters}
\end{figure}

Figure~\ref{fig:case-study-meters} summarizes the sample complexity of various approaches.
As we can see, the \textbf{Resampling} approach is perhaps too expensive for practical use, as it requires around 200K to 300K human labels even for a short development history with only 8 steps.
All versions of \meter~can significantly improve sample complexity over \textbf{Resampling}, by 2.5$\times$ to 8$\times$.
Moreover, the incremental meters outperform the regular meters, by 1.6$\times$ to 3$\times$.
The most significant improvement comes from the nonuniform incremental meter, whose sample complexity is close to that of the ideal case represented by \textbf{No Adaptivity}: For example, it would require only 25K labels when setting reliability to 0.9; even when reliability increases to 0.99, it would require just 37K labels.
These observations are in line with what we have observed in previous evaluations, and they demonstrate the practicality of integrating the meters into real-world ML application development activities.

\section{Related Work}\label{sec:related-work}

\paragraph*{AutoML Systems}

There is a flurry of recent work on developing automatic machine learning (AutoML) systems that aims for alleviating development effort by providing ``declarative'' machine learning services. In a typical AutoML system, users only need to upload their datasets and provide high-level specifications of their machine learning tasks (e.g., schemata of inputs/outputs, task categories such as binary classification/multi-class classification/regression, loss functions to be minimized, etc.), and the system can take over the rest via automated pipeline execution (e.g., training/validating/testing), infrastructure support (e.g., resource allocation, job scheduling), and performance-critical functionality such as model selection and hyperparameter tuning. Examples of such systems include industrial efforts made by major cloud service providers such as Amazon~\cite{AmazonSageMaker}, Microsoft~\cite{AzureML}, and Google~\cite{googleCloudML}, as well as ones from the academia, such as the Northstar system developed at MIT~\cite{Northstar} and our own recent effort on the \eml service~\cite{KarlasLWZ18,LiZLWZ18,ZhangWL17}. Our work in this paper is in line with but orthogonal to these AutoML systems: While they have focused on automation (and therefore \emph{efficiency}) of the ML application development cycle itself, we seek an automated way of verifying the \emph{efficacy} of the ML application developed.

\paragraph*{Adaptive Analysis}
Recent theoretical work has revealed the increased risk of drawing false conclusion (known as ``false discovery'' in the literature) via statistical methods, due to the presence of adaptive analysis~\cite{DworkFHPRR15}.
Moreover, it has been shown that achieving statistical validity is often computationally intractable in adaptive settings~\cite{HardtU14,SteinkeU15}.
Intuitively, adaptive analysis makes it more likely to draw conclusions tied to the specific data set used in a statistical study, rather than conclusions that can be generalized to the underlying distribution that governs data generation.
Our contribution in this paper can be viewed as an application of this theoretical observation to a novel, important scenario emerging from the field of ML application development lifecycle management and quality control.
\section{Conclusion}\label{sec:conclusion}

We have presented \meter, an overfitting management system for modern, continuous ML application development.
We discussed its system architecture, design principles, as well as implementation details.
We focused on addressing the primary challenge that could prevent \meter~from being practical, i.e., the sample complexity of the test set in the presence of adaptive analysis.
We further proposed various optimization techniques that can significantly take down the required amount of test labels, by applying recent developments from the theory community.
Evaluation results demonstrate that \meter~can outperform resampling-based approaches with test set sizes that are an order of magnitude smaller, while providing the same, stringent probabilistic guarantees of the overfitting signals.

\clearpage
\bibliography{references}
\bibliographystyle{abbrv}

\clearpage
\appendix
\section{Theoretical Results}

This section includes theoretical results referenced by the paper, the details of which have been omitted due to space limitation.

\subsection{Hoeffding's Inequality}
\label{appendix:theory:hoeffiding}

Throughout this paper, we use the following form of the Hoeffding's inequality.
(A more general form is in Lemma 4.5 of~\cite{UMLbook}.)

\begin{theorem}[Hoeffding's Inequality]
\label{thm:hoeffding}
For a bounded function $0\leq h(x) \leq 1$ on the i.i.d. random variables $x_1,...,x_n \in \mathcal{X} $  and for all $\epsilon > 0 $ we have 
\begin{equation}
\Pr \left[  \left| \frac{1}{n} \sum_{i} h(x_i)- \mathbb{E}[h(x)] \right|   > \epsilon \right] < 2\exp \left(  -2n \epsilon^2 \right). 
\end{equation}
\end{theorem}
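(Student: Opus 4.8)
The plan is to prove the two-sided tail bound by establishing each one-sided estimate $\Pr[\frac{1}{n}\sum_i h(x_i) - \mathbb{E}[h(x)] > \epsilon] \le \exp(-2n\epsilon^2)$ separately and then combining them via a union bound; the symmetric lower tail follows by applying the identical argument to $1-h$ in place of $h$. To lighten notation I would set $\mu = \mathbb{E}[h(x)]$ and $Z_i = h(x_i) - \mu$, so that the $Z_i$ are i.i.d., mean-zero, and bounded: since $0 \le h(x_i) \le 1$, each $Z_i$ lies in an interval of length exactly $1$. Writing $S = \frac{1}{n}\sum_i Z_i$ for the centered empirical average, the goal reduces to bounding $\Pr[S > \epsilon]$.

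First I would apply the Chernoff--Markov exponential moment method. For any $s > 0$, monotonicity of $t \mapsto e^{st}$ together with Markov's inequality gives $\Pr[S > \epsilon] = \Pr[e^{sS} > e^{s\epsilon}] \le e^{-s\epsilon}\,\mathbb{E}[e^{sS}]$. Independence of the $Z_i$ then lets me factorize the moment generating function as $\mathbb{E}[e^{sS}] = \prod_{i=1}^{n} \mathbb{E}[e^{(s/n)Z_i}]$, reducing the problem to bounding a single factor.

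The main obstacle --- and the only genuinely nontrivial ingredient --- is Hoeffding's lemma: for a mean-zero random variable $Z$ supported in $[a,b]$ and any $\lambda \in \mathbb{R}$, one has $\mathbb{E}[e^{\lambda Z}] \le \exp(\lambda^2 (b-a)^2/8)$. I would prove this by convexity: since $e^{\lambda z}$ is convex, for $z \in [a,b]$ it lies below the chord, $e^{\lambda z} \le \frac{b-z}{b-a}e^{\lambda a} + \frac{z-a}{b-a}e^{\lambda b}$; taking expectations and invoking $\mathbb{E}[Z]=0$ eliminates the linear term and yields a bound $\mathbb{E}[e^{\lambda Z}] \le e^{\phi(u)}$, where $u = \lambda(b-a)$, $p = -a/(b-a) \in [0,1]$, and $\phi(u) = -pu + \ln(1-p+pe^u)$. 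A short computation gives $\phi(0)=\phi'(0)=0$ and $\phi''(u) = \frac{p(1-p)e^u}{(1-p+pe^u)^2} \le \tfrac14$ for all $u$; the final inequality is just $t(1-t)\le\tfrac14$ after the substitution $t = pe^u/(1-p+pe^u)$. Taylor's theorem with remainder then delivers $\phi(u) \le u^2/8$, which is the lemma.

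Applying the lemma with $\lambda = s/n$ and $b-a = 1$ bounds each factor by $\exp(s^2/(8n^2))$, so $\Pr[S > \epsilon] \le \exp\!\big(s^2/(8n) - s\epsilon\big)$. Finally I would optimize the free parameter: the exponent $\frac{s^2}{8n} - s\epsilon$ is minimized at $s = 4n\epsilon$, where it equals $-2n\epsilon^2$, giving $\Pr[S > \epsilon] \le \exp(-2n\epsilon^2)$. Combining this with the matching lower-tail estimate via $\Pr[|S| > \epsilon] \le \Pr[S > \epsilon] + \Pr[S < -\epsilon]$ accounts for the factor of $2$ and completes the proof.
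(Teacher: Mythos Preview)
Your proof is correct and follows the standard Chernoff--Hoeffding route: exponential Markov, factorization by independence, Hoeffding's lemma via the convexity-and-Taylor argument, then optimizing the free parameter and doubling for the two-sided bound. There is nothing to compare against, however, because the paper does not prove this theorem at all --- it simply states the inequality and refers the reader to the literature (Lemma~4.5 of~\cite{UMLbook}) for the general form. So you have supplied more than the paper itself does. One cosmetic point: your argument yields the bound with a non-strict inequality $\le 2\exp(-2n\epsilon^2)$, whereas the paper's statement writes a strict $<$; the strict form does not follow from the Chernoff method alone in degenerate cases, but this is a quirk of the paper's phrasing rather than a defect in your reasoning.
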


\subsection{Basics for Sample Complexity Analysis}
\label{appendix:theory:sample-complexity-basics}

Here we cover basics for sample complexity analysis. In particular, we present analysis for two basic cases: (1) one single submission; and (2) multiple, independent submissions. For comparison purpose, we also include analysis for the \textbf{Resampling} baseline.

\subsubsection{The Case of One Single Model}
\label{appendix:theory:sample-complexity-basics:single}

\begin{definition}
We say that $D_{test}$ overfits by $\epsilon$ with respect to model $H$ and performance measure $l$, if and only if
$|\Delta_H| > \epsilon.$
\end{definition}
We can simply leverage the Hoeffding's inequality (Appendix~\ref{appendix:theory:hoeffiding}) to derive $|D_{test}|$, assuming data points in $D_{test}$ are i.i.d. samples from $\mathcal{D}_{test}$. Specifically, 
\begin{theorem}\label{theorem:one-model}
Suppose that we use a loss function $l$ bounded by $[0, 1]$.
Given a model $H\in\mathcal{H}$, the required test set size satisfies
\begin{equation}
\Pr [  |\Delta_H|  > \epsilon] < 2\exp \left(  -2|D_{test}| \epsilon^2 \right) < \delta.
\end{equation}
As a result, it follows that 
$|D_{test}| > \frac{\ln(2/\delta)}{2\epsilon^2}.$
\end{theorem}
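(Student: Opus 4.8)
The plan is to reduce the statement directly to the form of Hoeffding's inequality already recorded as Theorem~\ref{thm:hoeffding}. First I would unfold the definition $\Delta_H = l(H, D_{test}) - \mathbb{E}_{D\sim\mathcal{D}_{test}}[l(H,D)]$ and observe that, since $D_{test}=\{(x_i,y_i)\}_{i=1}^{n}$ consists of i.i.d.\ draws from $\mathcal{D}_{test}$ with $n=|D_{test}|$, the empirical term is exactly an average $l(H,D_{test}) = \tfrac{1}{n}\sum_{i=1}^{n} h(x_i,y_i)$ of the fixed function $h(x,y) := l(x,y,H(x))$, whose expectation is $\mathbb{E}_{D\sim\mathcal{D}_{test}}[l(H,D)]$. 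The crucial point to emphasize here is that $H$ is a single, fixed model --- it is \emph{not} a function of $D_{test}$ --- so $h$ is a legitimate fixed bounded function and no union bound over a hypothesis class is needed; this is precisely the contrast with the adaptive scenarios treated later in Section~\ref{sec:meter}.

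Next I would check the boundedness hypothesis of Theorem~\ref{thm:hoeffding}: because $l$ takes values in $[0,1]$ by assumption, $0 \le h(x,y) \le 1$, so Hoeffding applies verbatim and yields
\[
\Pr\big[\,|\Delta_H| > \epsilon\,\big] = \Pr\!\left[\left|\tfrac{1}{n}\textstyle\sum_i h(x_i,y_i) - \mathbb{E}[h]\right| > \epsilon\right] < 2\exp\!\left(-2n\epsilon^2\right).
\]
This establishes the first chain of inequalities in the statement once we additionally demand the right-hand side be at most $\delta$.

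Finally I would solve $2\exp(-2n\epsilon^2) < \delta$ for $n$ by routine algebra: take logarithms to get $-2n\epsilon^2 < \ln(\delta/2)$, i.e.\ $2n\epsilon^2 > \ln(2/\delta)$, hence $n > \ln(2/\delta)/(2\epsilon^2)$, which is the claimed bound on $|D_{test}|$. There is no real obstacle in this argument --- the only thing worth stating carefully is the i.i.d.\ assumption on $D_{test}$ and the fact that $H$ is data-independent, since both are exactly the assumptions that will fail in the multi-access / adaptive setting and motivate the more elaborate description-length arguments in the sequel.
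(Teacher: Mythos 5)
Your proposal is correct and follows essentially the same route as the paper's proof: identify $l(H,D_{test})$ as an empirical mean of the fixed bounded function $h(x,y)=l(x,y,H(x))$ over i.i.d.\ samples, apply Hoeffding's inequality, and solve $2\exp(-2n\epsilon^2)<\delta$ for $n$. Your additional remark that $H$ must be data-independent for Hoeffding to apply is a useful clarification consistent with the paper's later treatment of the adaptive case.
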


\begin{proof}
Suppose that we use a loss function $l$ bounded by $[0, 1]$.
Given a model $H\in\mathcal{H}$, we have
$$\Pr \left[  \left|\frac{1}{n} \sum_{x_i\in D_{test}} l(H, x_i)  - \mathbb{E}[l(H, x)] \right| > \epsilon \right] < 2\exp \left(  -2n \epsilon^2 \right).$$
Since 
$$l(H, D_{test})=\frac{1}{n} \sum_{x_i\in D_{test}} l(H, x_i)$$
and 
$$l(H,\mathcal{D}_{test})=\mathbb{E}\left[l(H, x) \right],$$
it follows that
$$\Pr \left[  \left|l(H, D_{test})  - l(H,\mathcal{D}_{test}) \right|  > \epsilon \right] < 2\exp \left(  -2n \epsilon^2 \right),$$
where $n=|D_{test}|.$ Setting $2\exp \left(  -2n \epsilon^2 \right)<\delta$ yields
$$|D_{test}| > \frac{\ln(2/\delta)}{2\epsilon^2}.$$
This completes the proof of the theorem.
\end{proof}

For example, if the user sets $\epsilon=0.1$ and $\delta=0.05$, then the test set needs at least 185 data examples.
On the other hand, if the user wishes a more fine-grained overfitting indicator with higher confidence, e.g., $\epsilon=0.01$ and $\delta=0.01$, then the size of the test set would blow up to 26,492.

\subsubsection{The Case of Independent Models}
\label{appendix:theory:sample-complexity-basics:independent}

Suppose that all versions submitted by the developer for testing are \emph{independent} (i.e., the next submission does not depend on the overfitting signal returned by \meter~for the present submission).
Then there are just $T$ possible submissions in $\mathcal{H}^{(T)}$.
By Definition~\ref{definition:history}, applying the union bound (combined with the Hoeffding's inequality) we obtain
\begin{equation*}
\Pr [\exists H \in\mathcal{H}^{(T)},~~|\Delta_H|  > \epsilon] < 2T\exp \left(  -2|D_{test}| \epsilon^2 \right) < \delta,
\end{equation*}
where $\mathcal{H}=\{H_1, ..., H_T\}$.
The required $|D_{test}|$ is then simply
\begin{equation*} 
     | D_{test}|  > \frac{\ln(2T/\delta)}{2\epsilon^2}.
\end{equation*}
For instance, if we set $\epsilon=0.01$ and $\delta=0.01$, we need 38K examples to test $T=10$ (independent) submissions.

\vspace{-0.5em}
\subsubsection{The Resampling Baseline}
\label{appendix:theory:sample-complexity-basics:re-sampling}

Let the test set in the $t$-th submission be $D_{test}^{(t)}$, for $1\leq t\leq T$.
We require $|D_{test}^{(t)}|=\cdots=|D_{test}^{(T)}|=n$, and we are interested in the total number of test examples 
$$|D_{test}| = |D_{test}^{(t)}| + ...+ |D_{test}^{(T)}|=T\cdot n.$$ 
Again, applying the union bound (combined with the Hoeffding's inequality) we obtain
\begin{equation*}
\Pr [\exists H \in\mathcal{H}^{(T)},~~|\Delta_H|  > \epsilon] < 2T\exp \left(  -2n \epsilon^2 \right) < \delta,
\end{equation*}
where $\mathcal{H}=\{H_1, ..., H_T\}$. It then follows that
\begin{equation*}
    |D_{test}| = T\cdot n > T\cdot\frac{\ln(2T/\delta)}{2\epsilon^2}.
\end{equation*}
Use the same setting where $\epsilon=0.01$ and $\delta=0.01$, it would now require 380K examples to test just $T=10$ models.

\subsection{Proofs of Theorems}\label{appendix:theory:proofs}



\subsubsection{Proof of Theorem~\ref{theorem:uniform}}
\label{appendix:theory:proofs:uniform}
\begin{proof}
Following the idea when proving the Ladder mechanism~\cite{blum2015ladder}, let $\mathcal{T}$ be the tree that captures the dependencies between submissions (in the adaptive setting). It then follows that
$$\Pr [\exists H \in\mathcal{H}^{(T)},~~|\Delta_H|  > \epsilon] < 2|\mathcal{T}|\exp \left(  -2|D_{test}| \epsilon^2 \right),$$
using the union bound over all submissions in $\mathcal{T}$ and applying the Hoeffding's inequality for each submission.

For the uniform, regular meter, $\mathcal{T}$ is a \emph{perfect} $m$-ary tree with $T$ levels (except for the root that represents $H_0$), where each internal tree node has exactly $m$ children.
Therefore, 
$$|\mathcal{T}|=\sum_{t=1}^{T}m^t=\frac{m(m^T-1)}{m-1}=|\mathcal{T}(m,T)|.$$
This completes the proof of the theorem.
\end{proof}

\subsubsection{Proof of Theorem~\ref{theorem:uniform-incremental}}
\label{appendix:theory:proofs:uniform-incremental}

\begin{proof}
Again, the idea is to compute the tree size $|\mathcal{T}|$.
Due to additional constraints in the incremental meter, $|\mathcal{T}|\leq |\mathcal{T}(m,T)|$ in Theorem~\ref{theorem:uniform}.
Specifically, we use $h(k, t)$ to represent the number of tree nodes with value $k$ and depth $t$, for $1\leq k\leq m$ and $1\leq t\leq T$.
It then follows that
\begin{equation}\label{eq:tree-size}
  |\mathcal{T}|=\sum\nolimits_{k=1}^{m}\sum\nolimits_{t=1}^{T} h(k, t).  
\end{equation}
We now divide the whole proof procedure into two phases: (I) derive $h(k,t)$ and (II) derive $|\mathcal{T}|$ based on Equation~\ref{eq:tree-size}.

\vspace{0.5em}
\noindent
\textbf{\underline{I. Derive $h(k,t)$.}} We have the following observations:
\begin{align*}
    h(k,1) = 1 ={k-1 \choose 0}, &\quad &\forall k\in[m];\\
    h(k,2) = k = {k \choose 1},&\quad & \forall k\in[m];\\
    h(k,3) = k\cdot\frac{(k+1)}{2} = {k + 1 \choose 2},&\quad & \forall k\in[m];\\
    h(k,4) = k\cdot\frac{(k+1)}{2}\cdot\frac{(k+2)}{3}={k + 2 \choose 3},&\quad & \forall k\in[m].
\end{align*}
In general, using induction (on $t$) we can prove that
\begin{equation}\label{eq:h_k_t}
  h(k,t)={k+t-2 \choose t-1}.  
\end{equation}
To see this, suppose that $h(k,t)={k+t-2 \choose t-1}$ for all $1\leq t\leq n$.
Now consider $t=n+1$.
Observe that $h(k,t+1)=\sum_{i=1}^k h(i,t)$ for all $t\geq 1$,
as at the step $t+1$ we can grow $\mathcal{T}^{(t)}$ -- the snapshot of $\mathcal{T}$ at time $t$ -- by appending nodes with value $k$ only behind nodes with values $i\leq k$ in $\mathcal{T}^{(t)}$.
As a result,
\begin{equation}\label{eq:induction}
    h(k,n+1)=\sum_{i=1}^k h(i, n)=\sum_{i=1}^k {i + n-2 \choose n-1}.
\end{equation}
We can get a closed-form solution to $h(k, n+1)$ by using the recursive definition of the binomial coefficient: 
\begin{align}
\label{eq:binomcoeff_recursive}
 {n \choose k} = {n-1 \choose k-1} + {n-1 \choose k}.
\end{align}

Specifically, writing Equation~\ref{eq:induction} explicitly and further noticing that ${1 + n - 2 \choose n}=0$, we obtain
\begin{eqnarray*}
    (\ref{eq:induction}) 
    &=& {1 + n - 2 \choose n} + {1 + n-2 \choose n-1} + \cdots + {k + n-2 \choose n-1}\\
    &=& {k + n-1 \choose n}.
\end{eqnarray*}
This finishes the induction on proving Equation~\ref{eq:h_k_t}.

\vspace{0.5em}
\noindent
\textbf{\underline{II. Derive $|\mathcal{T}|$.}} By Equations~\ref{eq:tree-size} and~\ref{eq:h_k_t} we have
$$|\mathcal{T}|=\sum\nolimits_{k=1}^{m}\sum\nolimits_{t=1}^{T} {k+t-2 \choose t-1}.$$
Define 
\begin{equation}\label{eq:I_k_T:def}
    |\mathcal{T}^{(k)}(m,T)|=\sum\nolimits_{t=1}^{T} {k+t-2 \choose t-1}.
\end{equation}
Using the same technique as in proving Equation~\ref{eq:induction}, we can prove
\begin{equation}\label{eq:I_k_T}
  |\mathcal{T}^{(k)}(m,T)|={k+T-1 \choose T-1} = {k+T-1 \choose k}.  
\end{equation}
Specifically, notice that 
${k + 2 - 2 \choose 0}={k + 1 - 2 \choose 0} = 1.$
It follows that
\begin{eqnarray*}
    (\ref{eq:I_k_T:def}) & = & {k+1-2 \choose 0} + {k+2-2 \choose 1} + \cdots + {k+T-2 \choose T-1} \\
    &=& {k+2-2 \choose 0} + {k+2-2 \choose 1} + \cdots + {k+T-2 \choose T-1} \\
    &=& {k+T-1 \choose T-1}.
\end{eqnarray*}
Applying the same technique once again, we can show that
\begin{equation} \label{eq:tree-size-uniform-incremental}
    |\mathcal{T}| = {m+T \choose m} - 1.
\end{equation}
Specifically, notice that ${1+T - 1 \choose 0} = 1$. Let $S=|\mathcal{T}|+1$. We have
\begin{eqnarray*}
    S &=& {1+T - 1 \choose 0} + \sum\nolimits_{k=1}^{m} {k+T-1 \choose k}\\
    &=& {1+T - 1 \choose 0} + {1+T - 1 \choose 1} + \cdots + {m+T - 1 \choose m}\\
    &=& {m+T \choose m}.
\end{eqnarray*} 
This finishes the proof of Theorem~\ref{theorem:uniform-incremental}.
\end{proof}

\subsubsection{Proof of Corollary~\ref{corollary:nonuniform-regular}}
\label{appendix:theory:proofs:nonuniform}

\begin{proof}
We need to count the number of tree nodes in $\mathcal{T}$ for each different $k\in[m]$.
By symmetry, each $k$ contributes equally to the number of tree nodes. Therefore, the number of tree nodes for each $k$ is simply (the same)
$|\mathcal{T}(m,T)|/m$.
\end{proof}

\subsubsection{Proof of Corollary~\ref{corollary:nonuniform-incremental}}
\label{appendix:theory:proofs:nonuniform-incremental}

\begin{proof}
Similarly, we count the number of tree nodes in $\mathcal{T}$ for each individual $k\in[m]$.
Following the proof of Theorem~\ref{theorem:uniform-incremental} in Appendix~\ref{appendix:theory:proofs:uniform-incremental}, it is easy to see that this count is simply $|\mathcal{T}^{(k)}(m,T)|={k+T-1 \choose k}$, i.e., Equation~\ref{eq:I_k_T}.
\end{proof}

\section{Multi-tenant Meter}
\label{appendix:multitenant-meter}

We can extend the results in the two-tenant setting to $l$ tenants. Under the assumption that every user submits an equal number of models $\frac{T}{l}$ we have 
\begin{align}
\delta &> l \sum\nolimits_{k=1}^{m} 2 \cdot\frac{m^{T/l}-1}{m-1} \cdot \exp(-2 | D_{test}| \epsilon_k^2), \\
\delta &> l \sum\nolimits_{k=1}^{m} 2 \cdot\binom{k + \frac{T}{l}-1}{k} \cdot \exp(-2 | D_{test}| \epsilon_k^2) 
\end{align}
for the regular and incremental meters, respectively. 
We can further generalize the results to an unequal number of model submissions among tenants. Spefifically, given tenants $i = 1,...,l$, each of whom contributes $T_i$ models successively, we have 
\begin{align*}
\delta &> \sum\limits_{i=1}^{l} \sum\limits_{k=1}^{m} 2 \cdot\frac{m^{T_i}-1}{m-1} \cdot \exp(-2 | D_{test}| \epsilon_k^2), \\
\delta &> \sum \limits_{i=1}^{l} \sum\limits_{k=1}^{m} 2 \cdot\binom{k + T_i-1}{k} \cdot \exp(-2 | D_{test}| \epsilon_k^2).
\end{align*}




\end{document}